\def\eqref#1{equation~\ref{#1}}
\def\1{\bm{1}}
\DeclareMathAlphabet{\mathsfit}{\encodingdefault}{\sfdefault}{m}{sl}
\SetMathAlphabet{\mathsfit}{bold}{\encodingdefault}{\sfdefault}{bx}{n}
\newcommand{\E}{\mathbb{E}}
\newcommand{\Var}{\mathrm{Var}}
\newcommand{\Cov}{\mathrm{Cov}}
\title{PERRY: Policy Evaluation with Confidence Intervals using Auxiliary Data}
\author{\name Aishwarya Mandyam* \email am2@stanford.edu \\
      \addr Stanford University
      \AND
      \name Jason Meng* \email jiemeng@stanford.edu \\
      \addr Stanford University
      \AND
      \name Ge Gao \email gegao@stanford.edu\\
      \addr Stanford University
      \AND
      \name Jiankai Sun \email jksun@stanford.edu \\
      \addr Stanford University 
      \AND
      \name Mac Schwager \email schwager@stanford.edu \\
      \addr Stanford University 
      \AND 
      \name Barbara E. Engelhardt \email barbarae@stanford.edu \\
      \addr Gladstone Institutes \\ Stanford University
      \AND
      \name Emma Brunskill \email ebrun@cs.stanford.edu \\
      \addr Stanford University 
      }
\newtheorem{theorem}{Theorem}
\newtheorem{lemma}[theorem]{Lemma}
\newtheorem{proposition}[theorem]{Proposition}
\newtheorem{corollary}[theorem]{Corollary}
\theoremstyle{definition}
\newtheoremstyle{exampstyle}
  {3pt} % Space above
  {0pt} % Space below
  {} % Body font
  {} % Indent amount
  {\bfseries} % Theorem head font
  {.} % Punctuation after theorem head
  {.5em} % Space after theorem head
  {} % Theorem head spec (can be left empty, meaning `normal')
\theoremstyle{exampstyle}
\newtheorem{assumption}{Assumption}
\crefname{assumption}{Assumption}{Assumptions}
\theoremstyle{remark}
\newtheorem*{remark}{Remark}
\newcommand{\method}[1]{\textbf{\texttt{\textcolor{Green}{#1}}}}
\newcommand{\methodd}[1]{\textbf{\texttt{\textcolor{BurntOrange}{#1}}}}
\newcommand{\VDR}{\widehat{V}_{\mathrm{DR\text{-}PPI}:1}^{\pi_e}}
\newcommand{\VDRfull}{\widehat{V}_{\mathrm{DR\text{-}PPI}}^{\pi_e}}
\newcommand{\VstdDR}
\begin{document}

\maketitle

\begin{abstract}
Off-policy evaluation (OPE) methods estimate the value of a new reinforcement learning (RL) policy prior to deployment. Recent advances have shown that leveraging auxiliary datasets, such as those synthesized by generative models, can improve the accuracy of OPE methods. Unfortunately, such auxiliary datasets may also be biased, and existing methods for using data augmentation within OPE lack principled uncertainty quantification. In high stakes domains like healthcare, reliable uncertainty estimates are important for ensuring safe and informed deployment of RL policies. In this work, we propose two methods to construct valid confidence intervals for OPE with data augmentation. The first provides a confidence interval over $V^{\pi}(s)$, the policy value conditioned on an initial state $s$. To do so we introduce a new conformal prediction method suitable for Markov Decision Processes (MDPs) with continuous state spaces, extending prior work to higher-dimensional settings. Second, we consider the more common task of estimating the average policy performance over all initial states, $V^{\pi}$; we introduce a method that draws on ideas from doubly robust estimation and prediction powered inference. Across simulators spanning inventory management, robotics, healthcare, and a real healthcare dataset from MIMIC-IV, we find that our methods can effectively leverage auxiliary data and consistently produce confidence intervals that cover the ground truth policy values, unlike previously proposed methods. Our work enables a future in which OPE can provide rigorous uncertainty estimates for high-stakes domains.
\end{abstract}
\section{Introduction}
\label{sec:introduction}
Off-policy evaluation (OPE)~\citep{precup_ope,Sutton_Barto_2018} is used to estimate the value of a new (target) reinforcement learning (RL) policy prior to deployment using a historical (behavior) dataset from a distinct behavior policy. OPE is particularly important in high-stakes domains~\citep{gottesman2020interpretableoffpolicyevaluationreinforcement,education_ope,fu2020d4rl}, where directly deploying new policies without prior evaluation can be costly or even harmful to participants. 

However, standard OPE methods frequently struggle when the target policy is very different from the behavior policy. This is due to limited dataset coverage~\citep{jiang2016doubly}.  To address this, several recent works have proposed using synthetic auxiliary data to improve the coverage of the behavior dataset and subsequently the accuracy of OPE methods~\citep{tang2023counterfactualaugmented,gao2024on,mandyam2024candorcounterfactualannotateddoubly}. However, such approaches have either focused on the contextual bandit setting, or focused on promising empirical success in sequential settings; they lack formal assurances on the quality of the proposed estimates. 

In high stakes, multi-step domains, it is often important to have confidence intervals (CIs) over the proposed policy value estimates. Such intervals support safer, more informed policy selection and deployment. Therefore, we argue that principled uncertainty quantification is necessary for OPE in RL in the emerging regime where both real and synthetic trajectories are used. While 
there is a notable body of prior work that developed CIs using \textit{only} offline (real) data for OPE in RL~\citep{thomas2015highevaluation,thomas2015highimprovement,taufiq2022conformaloffpolicypredictioncontextual,foffano2023conformaloffpolicyevaluationmarkov}, to our knowledge, none provides guarantees in settings that combine offline and synthetic trajectories. In this paper we take steps towards addressing this gap.

We formalize uncertainty quantification for OPE with mixed (real and synthetic) behavior data and identify two settings that require uncertainty-aware OPE. First, in domains like healthcare, it is common for stakeholders to deliberate between decision-making policies to use for individuals that start in the same state: for example, a clinician may use the same treatment policy on all patients in the same stage of a disease. Estimating CIs for initial-state-conditioned policy performance is thus an important task that can benefit substantially from data augmentation. Our first method, \method{CP-Gen}, provides conformal prediction intervals for such state-conditioned values.

Second, we address evaluation of the target policy’s expected value averaged over the distribution of initial states. This is relevant in settings where a single policy may be selected for the whole population, and a stakeholder wants to choose among different policies. We introduce a second method \methodd{DR-PPI}, which leverages techniques from doubly robust estimation and prediction-powered inference~\citep{angelopoulos2023predictionpoweredinference} to correct biases from synthetically generated trajectories and produce valid CIs.

Our empirical studies across inventory control, sepsis treatment, robotic control, and the MIMIC-IV electronic health record (EHR) dataset demonstrate that our methods, which can leverage synthetic data, can match or improve over state-of-the-art baselines that provide correct CIs using only real data. Our contributions follow.

\begin{enumerate}
    \item \textbf{We formalize the problem of uncertainty quantification} for OPE in MDPs that leverage synthetically generated trajectories and introduce \method{CP-Gen} and \methodd{DR-PPI} (\Cref{sec:methods}) for two natural settings where CIs are important.
    \item \textbf{We prove that both methods yield valid CIs} and achieve the desired coverage probability either asymptotically or within a small margin of error for finite sample sizes (\Cref{sec:theory}).
    \item \textbf{We empirically evaluate the estimators} in four domains including a real-world EHR dataset, showing that our estimates, which leverage auxiliary synthetic data, produce CIs with the correct coverage that match or are tighter than baselines that do not use the auxiliary data. (\Cref{sec:experiments}).
\end{enumerate}

\section{Background}
\label{sec:background}
\subsection{Problem setting}
We consider a decision-making setting defined by the MDP \(\mathcal{M}=(\mathcal{S},\mathcal{A},P,R,d_{0},\gamma,H)\). $\mathcal{S}, \mathcal{A}$ denote the possibly infinite state and action spaces respectively. $P: \mathcal{S} \times \mathcal{A} \to \Delta(\mathcal{S})$ represents the transition dynamics, $R: \mathcal{S} \times \mathcal{A} \to \Delta(\mathbb{R})$ is the reward function, and $d_0 \in \Delta(\mathcal{S})$ is the initial state distribution. $\gamma$ is the discount factor and $H$ is the fixed horizon. A trajectory $\tau$ is defined as $\tau: \{s_t, a_t, r_t\}_{t=0}^H$ where $s_t, a_t, r_t$ are the state, action, and instantaneous reward observed at timestep $t$. The return of a trajectory $\tau$ is $J(\tau) = \sum_{t=1}^H \gamma^{t-1} r_t$ where $\tau \sim \pi$ and $\pi$ is the policy used to generate the trajectory. The value of the policy $V^{\pi} = \mathbb{E}_{\tau \sim \pi}[J(\tau)]$ is calculated as an expectation over the possible trajectories that could arise from $\pi$. The value of a policy conditioned on an initial starting state $s$ is $V^{\pi}(s) = \mathbb{E}_{\tau \sim \pi}[J(\tau) | s_0 = s]$. 

\subsection{Off-policy evaluation (OPE)}
Our work focuses on the task of OPE, where the goal is to estimate the value of a target policy $\pi_e$ given a dataset of behavior trajectories $D_{\pi_b}$ that arise from a distinct behavior policy $\pi_b$. In a typical OPE setup, we assume access to $\pi_e$. In our work, we also assume $\pi_b$ is known similar to prior work~\citep{thomas2016dataefficient,farajtabar2018robust}, though we apply our methods empirically in settings where $\pi_b$ must be estimated. 

There are several standard approaches for OPE, including importance sampling (IS) ~\citep{precup_ope}, direct method (DM)~\citep{Li_2010, beygelzimer_2009,Seijen2009ATA, harutyunyan2016qlambdaoffpolicycorrections,le2019batchpolicylearningconstraints,voloshin2021empirical}, and doubly robust (DR) approaches ~\citep{farajtabar2018robust,dudik2011doubly,jiang2016doubly}. IS-based estimators re-weigh each trajectory in the $D_{\pi_b}$ using an inverse propensity score (IPS) $\rho(\tau) = \prod_{t=1}^H \frac{\pi_e(a_t|s_t)}{\pi_b(a_t|s_t)}$. DM estimators learn a reward model using the behavior trajectories to directly estimate the value of the target policy. 
DR methods combine the advantages of IS and DM estimators and provide favorable guarantees when either the IPS ratio or the reward model is inaccurate.

\subsection{Conformal Prediction}

One strategy we consider for uncertainty quantification is conformal prediction, which is a framework for constructing prediction intervals with finite-sample coverage guarantees under minimal assumptions~\citep{vovk_conformal}. Given a dataset of i.i.d.\ samples $\{Z_i\}_{i=1}^n$ and a nonconformity score function $V(Z)$, CP constructs a prediction set $\mathcal{C}_{n,\alpha}$ such that
\begin{equation*}
\mathbb{P}(Z_{n+1} \in \mathcal{C}_{n,\alpha}) \ge 1 - \alpha,
\end{equation*}
without requiring distributional assumptions beyond exchangeability.

In its simplest form, split conformal prediction partitions the data into a training set and a calibration set. A model is fit on the training set, and nonconformity scores $\{V_i\}_{i=1}^n$ are computed on the calibration set. The prediction interval is then constructed using empirical quantiles:
\begin{equation*}
\mathcal{C}_{n,\alpha} = \{z : V(z) \le Q_{1-\alpha}(\{V_i\}_{i=1}^n)\}.
\end{equation*}

In settings with distribution shift, such as OPE, the calibration data are not drawn from the target distribution. Weighted conformal prediction addresses this by assigning importance weights $w_i$ to each calibration sample~\citep{tibshirani2020conformalpredictioncovariateshift,taufiq2022conformaloffpolicypredictioncontextual}. The empirical distribution is replaced by a weighted empirical distribution
\begin{equation*}
F_n(v) = \sum_{i=1}^n p_i \mathbf{1}\{V_i \le v\}, \quad 
p_i = \frac{w_i}{\sum_{j=1}^n w_j + w_{n+1}},
\end{equation*}
where $w_{n+1}$ corresponds to the test point.

The resulting prediction set uses weighted quantiles
\begin{equation*}
\mathcal{C}_{n,\alpha} = \{z : Q_{\alpha/2}(F_n) \le V(z) \le Q_{1-\alpha/2}(F_n)\}.
\end{equation*}

\subsection{Prediction-Powered Inference}
A related line of work addresses the problem of drawing valid statistical conclusions when labeled data is scarce but model predictions can be used to add dataset coverage. Prediction-powered inference (PPI)~\citep{angelopoulos2023predictionpoweredinference} is a framework for constructing valid confidence intervals by combining a small labeled dataset with a larger unlabeled dataset augmented with model predictions.

Let $\{(X_i, Y_i)\}_{i=1}^n$ denote a labeled dataset, where $X_i \in \mathcal{X}$ are inputs and $Y_i \in \mathbb{R}$ are observed outcomes. In addition, suppose we have access to a larger unlabeled dataset $\{X_j\}_{j=1}^N$, for which the corresponding outcomes are not observed. Let $\hat{f}(X)$ denote a prediction model trained to estimate $Y$ from $X$.

The goal is to estimate a population parameter of the form
\begin{equation*}
\theta = \mathbb{E}[Y],
\end{equation*}
or more generally, $\theta = \mathbb{E}[g(X, Y)]$ for some function $g$. A prediction-powered estimator takes the form
\begin{equation*}
\hat{\theta}_{\text{PPI}} = \frac{1}{N} \sum_{j=1}^N \hat{f}(X_j)
+ \frac{1}{n} \sum_{i=1}^n \big(Y_i - \hat{f}(X_i)\big),
\end{equation*}
where the first term uses model predictions over the large unlabeled dataset for efficiency, and the second term corrects for bias using the labeled data. The PPI estimator can be interpreted as a bias-corrected plug-in estimator: the model-based term provides a low-variance estimate, while the residual correction ensures consistency even if the model $\hat{f}$ is misspecified. Under mild conditions, $\hat{\theta}_{\text{PPI}}$ is asymptotically normal and admits valid confidence intervals.

In our work, we take inspiration from the construction of the standard PPI estimator, which yields an asymptotic CI. However, the problem setup in PPI is distinct from ours. PPI assumes that we have access to a large dataset of observations that are unlabeled; the role of the ML model is to label the observations. In contrast, in our setting, we must both generate synthetic samples (i.e., trajectories) and their corresponding labels (i.e., returns); this setting necessitates a distinct methodology. 

\subsection{Related Literature}
\textbf{OPE with data augmentation}.
As discussed in \Cref{sec:introduction}, standard OPE methods suffer when the behavior dataset has limited coverage. Because OPE methods are typically used with finite sample sizes, OPE estimates can be biased or have high variance~\citep{precup_ope,jiang2016doubly,thomas2016dataefficient}. To address this concern, several works have proposed using auxiliary information to enhance OPE estimators, using data augmentation either from a secondary dataset~\citep{tang2023counterfactualaugmented,mandyam2024candorcounterfactualannotateddoubly} or by generating synthetic trajectories based on historical data~\citep{gao2024on,sun2023conformal,gao2023variational}.

Within model-based approaches, different classes of generative models have been considered. Some works learn transition models (e.g., Neural Networks~\citep{chua2018deep} or Variational Autoencoders (VAEs)~\citep{gao2024on}) that enable step-wise rollout under a policy, while others employ trajectory-level generative models such as diffusion-based models~\citep{sun2023conformal} to generate entire trajectories. These approaches differ in how they model dynamics and generate samples, but share the common goal of improving coverage of the state-action space. These works find that leveraging auxiliary data can improve OPE estimates in some domains such as robotic control. However, these methods may introduce additional bias due to errors in the auxiliary data, and lack theoretical guarantees or rigorous uncertainty quantification for the MDP setting.  In contrast, our work emphasizes uncertainty quantification, which can provide more information to effectively compare between policies. 

\textbf{Conformal prediction for OPE}.
There are several strategies to perform uncertainty quantification, including conformal prediction. \citet{taufiq2022conformaloffpolicypredictioncontextual} first applied weighted conformal prediction to the OPE setting for contextual bandits.~\citet{foffano2023conformaloffpolicyevaluationmarkov} later extended this work to create conformal intervals for OPE in the multi-step setting. Crucially, their approximation relies on an integral that is difficult to compute and implement in setting with continuous state spaces. One of our proposed methods, \method{CP-Gen}, is inspired by the last approach, but uses a novel technique to compute the weights needed for conformal prediction, allowing us to use OPE in settings with continuous and higher-dimensional state spaces than prior work. In addition, prior work did not consider data augmentation. Our new approach achieves tighter CIs through careful use of auxiliary synthetic datasets.

\section{Methods}
\label{sec:methods}
In this work, we study a setting in which we have access to a real, offline dataset, and synthetically generated trajectories. In general, trajectories produced by generative models may be biased or drawn from a distribution distinct from $\pi_b$, which can introduce error and/or variance into the resulting OPE estimate for sequential decision processes. We propose two new methods for computing CIs for OPE in RL for two common settings where CIs would be beneficial. For clarity, we summarize key notation in a table that can be referenced throughout the paper (\Cref{tab:notation}). 

\begin{table}[h!]
\centering
\small
\renewcommand{\arraystretch}{1.2}
\begin{tabular}{|l|p{14cm}|}
\hline
\rowcolor{gray!15}
\textbf{Notation} & \textbf{Description} \\
\hline
$J(\tau)$& Return of the trajectory $\tau$, $\sum_{t=1}^H\gamma^{t-1} r_t$ where $\gamma$ is a discount factor\\ \hline
$V^{\pi}$ & Value of the policy $\pi$, $\mathbb{E}_{\tau \sim \pi}[J(\tau)]$, which is calculated as an expectation over trajectories sampled from the policy. \\ \hline
$V^{\pi}(s)$& Value of the policy $\pi$ conditioned on a starting state $s$, $\mathbb{E}_{\tau \sim \pi}[J(\tau) | s_0 = s]$\\ \hline
$p^{\pi_e}(\tau)$& Probability of observing trajectory $\tau$ under the target policy $\pi_e$. \\ \hline
$\tilde{p}^{\pi_e}(\tau)$& Probability of observing trajectory $\tau$ under the dynamics distribution of the generative model. \\ \hline
$\Delta_{rr'}$& Return difference of a pair of trajectories, one from the original behavior dataset and one generated. For example, $J(\tau_i) - J(\tilde{\tau_j})$ where $\tau_i$ is an observed trajectory and $\tilde{\tau_j}$ is a generated trajectory. $\Delta_{rr'}$ represents the random variable of return differences, and $\delta_{rr'}$ is the return difference for a single sample. \\ \hline
$\mathbb{P}^{\pi}_{(S, \Delta_{rr'})}$ & Joint distribution of the initial state $S$ and return-difference random variable $\Delta_{rr'}$ induced by trajectories drawn under the policy $\pi$. \\ \hline
$w(s, \delta_{rr'})$& Weight associated with sample that has an initial-state $s$ and score (or return difference) $\delta_{rr'}$. \\ \hline
$w_{\epsilon}(s, \delta_{rr'})$& Approximated weight for sample with initial-state $s$ and score $\delta_{rr'}$. \\ \hline
$\epsilon_s, \epsilon_r$ & Radius of ball around a given state $s$ and a given score $\delta_{rr'}$. \\ \hline
$F_n^{(s,\delta_{rr'})}$ & Weighted empirical cumulative distribution function (CDF) of calibration scores (return differences) evaluated at $(s,\delta_{rr'})$, constructed using normalized importance weights. \\ \hline 
$\text{score}_{i}^{(s, \delta_{rr'})}$ & Non-conformity score for sample $i$, equal to the return difference of the paired trajectories, i.e.\ $\text{score}_i=\Delta_{rr',i}=J(\tau_i)-J(\tilde{\tau}_i)$. \\ \hline 
$p_i^w(s, \delta_{rr'})$ & Normalized weight assigned to sample $i$ in the weighted conformal procedure; proportional to $w(S_i,\Delta_{rr',i})$ and normalized so weights sum to one (including the test point). \\ \hline 
$C_{n, \alpha}(s)$ & Weighted conformal prediction band for the return difference at state $s$, defined by the central $(1-\alpha)$ weighted quantiles of $F_n^{(s,\delta_{rr'})}$. \\ \hline 
$\hat{C}_{n,\alpha}(s)$& Estimated conformal interval conditioned on an initial-state $s$ learned using $n$ offline trajectories with $1-\alpha$ confidence.\\ \hline
$\widehat{C}_{\alpha}$ & Estimated confidence interval with confidence level $(1-\alpha)$. \\ \hline
$d_0$ & Initial state-distribution \\ \hline 
$\tilde{J}(\tau)$ & Re-weighted return of trajectory $\tau \sim \pi_e$. Specifically, $\tilde{J}(\tau) = \rho(\tau) * J(\tau). $ \\ \hline
$D_1, D_2$ & Two splits of the behavior dataset $D_{\pi_b}$ \\ \hline

\end{tabular}
\caption{Reference table for notation used throughout the paper.}
\label{tab:notation}
\end{table}

\subsection{\method{CP-Gen}: Confidence Intervals for OPE  from a Starting State} 
\label{sec:exp_cp-gen}
First, we consider a setting in which our goal is to estimate an initial-state-conditioned policy value $V^{\pi_e}(s)$. 
Estimating state-conditioned policy values has been  understudied in the OPE literature, which tends to focus on estimators that average performance over the full population of initial states. The task of estimating state-conditioned policy values can benefit from data augmentation, since data from individual starting states is sparse. To address this, we propose \method{CP-Gen}, a new conformal prediction method for OPE. 

Given an initial state $s$, we estimate $V^{\pi_e}(s)$ by carefully manipulating the definition of the policy value as follows. 
\begin{align}
V^{\pi_e}(s) &=  \mathbb{E}_{\tau \sim p^{\pi_e}|s_0=s }\left[J(\tau)\right]\\
&=\sum_{\tau \sim p^{\pi_e}|s_0=s}p^{\pi_e}(\tau)J(\tau) \\
&=\underbrace{
\sum_{\tilde{\tau} \sim \tilde{p}^{\pi_e}|s_0=s}\tilde{p}^{\pi_e}(\tilde{\tau})J(\tilde{\tau})}_{\text{simulator estimate}} + \underbrace{\left[\sum_{\substack{\tau \sim p^{\pi_e}|s_0=s}}p^{\pi_e}(\tau)J(\tau) - \sum_{\tilde{\tau} \sim \tilde{p}^{\pi_e}|s_0=s}\tilde{p}^{\pi_e}(\tilde{\tau})J(\tilde{\tau})\right]}_{\text{model bias / return discrepancy}} \label{eq:add_subtract}
\end{align}
Here, $\tau \sim p^{\pi_e}$ is a trajectory drawn from the dynamics distribution associated with $\pi_e$ and $p^{\pi_e}(\tau)$ is the probability of observing trajectory $\tau$ under the policy $\pi_e$. 
In \Cref{eq:add_subtract}, we add and subtract the simulator estimate of the state-conditioned policy value $\sum_{\tilde{\tau} \sim \tilde{p}^{\pi_e}|s_0=s}\tilde{p}^{\pi_e}(\tilde{\tau})J(\tilde{\tau})$, where $\tilde{p}$ is the dynamics distribution induced by the generative model and $\tilde{\tau}$ is a synthetic trajectory  sampled from a generative model that approximates the dynamics distribution of the target policy, $\tilde{p}^{\pi_e}$. 

We can now approximate the ``model bias/return discrepancy'' term using empirical averages. We note that the empirical averages require access to trajectories that are sampled from the target policy distribution, $p^{\pi_e}$. The ``model bias/return discrepancy'' term can be approximated as 
\begin{align}
    & \approx \sum_{\tilde{\tau} \sim \tilde{p}^{\pi_e}|s_0=s}\tilde{p}^{\pi_e}(\tilde{\tau})J(\tilde{\tau}) + \underbrace{\frac{1}{n}\sum_{i=1}^n J(\tau_i|s_0=s) - \frac{1}{nM}\sum_{j=1}^{nM}J(\tilde{\tau}_j|s_0=s)}_{\text{approximate the expected value by empirical average}}\\
    &= \sum_{\tilde{\tau} \sim \tilde{p}^{\pi_e}|s_0=s}\tilde{p}^{\pi_e}(\tilde{\tau})J(\tilde{\tau}) + \frac{1}{nM}\sum_{i=1}^n \sum_{j=1}^{M}\underbrace{(J(\tau_i|s_0=s) - J(\tilde{\tau}_{ij}|s_0=s))}_{\text{return difference of a pair of trajectories}} \label{eqn:return_difference}, 
\end{align}

where $n/M$ is the number of  behavior/synthetic trajectories, and $J(\tau|s_0=s)$ is the return of trajectory $\tau$ given initial state $s$. 

Inspired by conformal prediction for regression, our goal is to produce an interval $\hat{C}_{n,\alpha}(s)$, which is specific to initial state $s$ and the number of offline behavior trajectories $n$. This interval defines a band such that, with high probability, the return difference between any offline trajectory and its corresponding generated trajectory that starts from the same initial state $s$ (``return difference of a pair of trajectories'' in \Cref{eqn:return_difference}) lies within the band. More specifically,
\begin{equation}
    P^{\pi_e}\left(\underbrace{J(\tau|s_0=s) - J(\tilde{\tau}|s_0=s)}_{\text{return difference of a pair of trajectories}} \in \hat{C}_{n,\alpha}(s)\right) \ge 1-\alpha,
\end{equation}
where $P^{\pi_e}$ is the probability measure induced by the target policy $\pi_e$ and $\alpha$ is the confidence level.
Given this goal, the final conformal prediction interval for the value of the initial state $s$, $V^{\pi_e}(s)$, is
\begin{equation}
\label{eqn:cp_final_interval}
    \sum_{\tilde{\tau}\sim \tilde{p}^{\pi_e},s_0=s}\tilde{p}^{\pi_e}(\tilde{\tau})J(\tilde{\tau}) + \hat{C}_{n,\alpha}(s).
\end{equation}

% If we set $\alpha$ appropriately (i.e., depending on the reward distribution), the band will cover the expected value of $R(\tau_i|s_0=s) - R(\tilde{\tau}_{ij}|s_0=s)$. This enables the final interval to cover $V^{\pi_e}(s)$.

\begin{remark}[Conformal band intuition]
\itshape
The role of conformal prediction in \method{CP-Gen} is to construct a distribution-free confidence interval for the return discrepancy between real and synthetic trajectories. By calibrating the return differences $J(\tau) - J(\tilde{\tau})$ using both offline and generated data, and reweighting to correct for the policy distribution shift between $\pi_b$ and $\pi_e$, we obtain an interval that provably covers the unknown bias term. Adding this interval to the synthetic estimate therefore yields a valid confidence interval for $V^{\pi_e}(s)$.
\end{remark}

\begin{minipage}{\textwidth}
\small
% \vspace{-1em}
% \begin{wrapfigure}{r}{\linewidth}
% \begin{minipage}{\textwidth}
\begin{algorithm}[H]
\small
\caption{\method{CP-Gen}}\label{alg:cp-ppi}
\begin{algorithmic}[1]
\Require Offline dataset $\mathcal{D}_{\pi_b}$, behavior policy $\pi_b$, target policy $\pi_e$, initial state $x$.
\State Split $D_{\pi_b}$ (size $n$) into $D_{tr}$ ($n/2$) and $D_{cal}$ ($n/2$)
\State Fit a generative model $\mathcal{T}$ using $D_{tr}$.
\State For each trajectory $\tau_i \in D_{tr}$, generate $M$ trajectories $\{\tilde{\tau}_{i,m}\}_{m=1}^M$ under $\pi_b$ with the same initial state as $\tau_i$, record the pairs as $\{(\tau_i, \tilde{\tau}_{i,m})\}_{m=1}^M$.
\State For each trajectory $\tau_j \in D_{cal}$, generate $N$ trajectories $\{\tilde{\tau}_{j,k}\}_{k=1}^N$ under $\pi_b$ with the same initial state as $\tau_j$, record the pairs as $\{(\tau_j, \tilde{\tau}_{j,k})\}_{k=1}^N$.
\State For each $(\tau_j, \tilde{\tau}_{j,k})$, calculate the weight $\hat{w}_{\epsilon}(x_j,J(\tau_j)-J(\tilde{\tau}_{j,k}))$ using $(\tau_i, \tilde{\tau}_{i,m})$  (\Cref{eqn:cpgen_weight}).
\State Given an initial state $x$, calculate $p^{\hat{w}}_{j,k}(x,y)$ and $p^{\hat{w}}_{\frac{nN}{2}+1}$ using \Cref{eq:cp-ppi-p}.
\State For each $(\tau_j, \tilde{\tau}_{j,k})$, calculate the score $ \text{score}_{j,k} = J(\tau_j)-J(\tilde{\tau}_{j,k})$.
\State Calculate $F^{(x,y)}$ using \Cref{eq:cp-ppi-F}.
\State Calculate confidence interval $\hat{C}_{n,\alpha}$ over the value of trajectories starting in initial state $x$ using \Cref{eq:cp-ppi-hatC}.
\State Rollout trajectories under $\pi_e$ from $\mathcal{T}$ and get the first term in \Cref{eqn:cp_final_interval}.
\end{algorithmic}
\end{algorithm}
% \end{minipage}
% \end{wrapfigure}

% \vspace{1em}
\end{minipage}
 % We apply the standard conformal prediction formalism from prior literature~\citep{tibshirani2020conformalpredictioncovariateshift,taufiq2022conformaloffpolicypredictioncontextual,foffano2023conformaloffpolicyevaluationmarkov}, where the input $X$ corresponds to the initial state $s_0$, and the output $Y$ corresponds to the difference in return of a pair of trajectories (one from the original behavior dataset, and one generated). 

The derivation above assumes access to trajectory pairs $(\tau, \tilde{\tau})$ drawn under the target policy $\pi_e$. However, in practice, we only observe real trajectories from the behavior policy $\pi_b$. To construct comparable pairs, we generate synthetic trajectories $\tilde{\tau}$ using the learned model while conditioning on initial states observed under $\pi_b$, and rolling out using $\pi_b$. As a result, the empirical return differences $J(\tau \mid s_0 = s) - J(\tilde{\tau} \mid s_0 = s)$ are drawn from a distribution induced by $\pi_b$, rather than $\pi_e$. To bridge this gap, we apply weighted conformal prediction, which reweights these samples to approximate the distribution of return differences under $\pi_e$, enabling valid inference despite the distribution shift.

Now, we use conformal prediction to calculate the band. Unlike standard conformal prediction, we must address the distribution shift induced by the difference between the behavior and target policies. 
To do so, prior work~\citep{foffano2023conformaloffpolicyevaluationmarkov}, which builds on related work~\citep{tibshirani2020conformalpredictioncovariateshift,taufiq2022conformaloffpolicypredictioncontextual}, proposed CP methods for MDPs that weigh the calibration scores using estimates of the likelihood ratio. 

However, this prior work does not consider the use of generated trajectories. Therefore we introduce a new sample reweighting technique that accounts for the distribution shift (between $\pi_b$ and $\pi_e$) in both the real and generated trajectories (see full derivation in Appendix \ref{apd:proofs}). To simplify notation, let $s \in S$ be the initial state and $\delta_{rr'} \in \Delta_{rr'}$ be the return difference of a pair of trajectories (one from the original behavior dataset, and one generated). Then, the weight for a given sample is  
\begin{align}
\label{eqn:cp_weight}
w(s,\delta_{rr'})&:= \mathbb{P}^{\pi_e}_{(S,\Delta_{rr'})}(s,\delta_{rr'}) /\mathbb{P}^{\pi_b}_{(S,\Delta_{rr'})}(s,\delta_{rr'})\\
& = \mathbb{E}_{\tau \sim p^{\pi_b}, \tilde{\tau} \sim \tilde{p}^{\pi_b}}\left[\underbrace{\frac{\prod_{t=1}^H\pi_e(a_t|s_t)\pi_e(\tilde{a}_t|\tilde{s}_t)}{\prod_{t=1}^H\pi_b(a_t|s_t)\pi_b(\tilde{a}_t|\tilde{s}_t)}}_\text{\shortstack{
product of IPS ratios\\
of real and generated trajectories
}}| \underbrace{s_0 = s, \delta_{J(\tau)J(\tilde{\tau})}=\delta_{rr'}}_\text{\shortstack{
conditioned on same initial state\\
and reward difference
}}\right].
\label{eqn:cp_weight_derivation}
\end{align}
 This weight is an expectation of the IPS ratio over all observations that share the same input ($s$) and score ($\delta_{rr'}$). However, calculating $w(s, \delta_{r r'})$ will become intractable as the size of the MDP increases, because we may not have access to many trajectories that share $s$ and $\delta_{rr'}$.

To mitigate this, and allow us to compute valid conformal prediction intervals in continuous state and action spaces, we use something we refer to as ``$\epsilon-$approximation'' to estimate the weight for a given sample. With $\epsilon$-approximation, we can approximate the weight as
\begin{equation}
\label{eqn:cpgen_weight}
    w_\epsilon(s, \delta_{rr'}) = \mathbb{E}_{\tau \sim p^{\pi_b}, \tilde{\tau} \sim \tilde{p}^{\pi_b}}\left[\frac{\prod_{t=1}^H\pi_e(a_t|s_t)\pi_e(\tilde{a}_t|\tilde{s}_t)}{\prod_{t=1}^H\pi_b(a_t|s_t)\pi_b(\tilde{a}_t|\tilde{s}_t)}| \underbrace{s_0\in B(s,\epsilon_s), \delta_{J(\tau)J(\tilde{\tau})}\in B(\delta_{rr'},\epsilon_r)}_\text{$\epsilon$-balls around $s$ and $\delta_{rr'}$}\right], 
\end{equation}
where $B(s, \epsilon_s)$ represents a ball around the input $s$ of radius $\epsilon_s$ and $B(\delta_{rr'}, \epsilon_r)$ is a ball around the output $\delta_{rr'}$ of radius $\epsilon_r$. This setup allows for small perturbations around $s$ and $\delta_{rr'}$ when aggregating samples. In particular, $B(s, \epsilon_s)$ captures any input $s$ that is within a small distance $\epsilon_s$ of $s$, and likewise for $B(\delta_{rr'}, \epsilon_r)$. 

\vspace{0.5em}
\begin{remark}[Weight approximation in continuous state-space MDPs]
\itshape
The weight $w(s,\delta_{rr'})$ (\Cref{eqn:cp_weight}) is a population quantity that correctly accounts for distribution shift between the behavior and evaluation policies for both observed and synthetically generated trajectories. If this weight were known exactly, \method{CP-Gen} would yield exact conformal prediction intervals.
However, in continuous state and return spaces, it is likely that the event $\{S=s,\Delta_{rr'}=\delta_{rr'}\}$ has probability zero, making direct estimation of $w(s,\delta_{rr'})$ infeasible from finite data.

To address this, we introduce ``$\epsilon$-approximation,'' which replaces exact conditioning with conditioning on local neighborhoods. Specifically, the balls $B(s,\epsilon_s)$ and $B(\delta_{rr'},\epsilon_r)$ collect trajectories whose initial states and return differences are close to $(s,\delta_{rr'})$, thereby pooling nearby samples to enable stable estimation. This approximation trades a controlled amount of bias for statistical tractability, which enables the application of conformal prediction in MDPs with continuous or infinite state and action space.
Our theoretical analysis in Section~\ref{sec:theory} shows that the resulting loss in coverage is explicitly bounded as a function of $\epsilon_s$ and $\epsilon_r$.
We emphasize that this $\epsilon$-based reweighting is a key technical contribution of \method{CP-Gen}, enabling conformal OPE with synthetic trajectories in settings where exact likelihood-ratio weighting is computationally or statistically infeasible.
\end{remark}

Now, using the weights $w_\epsilon$, the conformal band is
\begin{align}
\label{eq:cp-ppi-hatC}
    \hat{C}_{n,\alpha}(s) &= \{ \delta_{rr'}: \underbrace{Q(\frac{\alpha}{2}, F_n^{(s,\delta_{rr'})})}_\text{$\frac{\alpha}{2}$ quantile of the CDF $F_n$} \le \underbrace{\text{score}_{n+1}^{(s,\delta_{rr'})}}_\text{score of this pair of trajectories} \le \underbrace{Q(1-\frac{\alpha}{2}, F_n^{(s,\delta_{rr'})})}_\text{$(1- \frac{\alpha}{2})$ quantile of the CDF $F_n$} \},
\end{align}
where
\begin{equation}
\quad F_n^{(s,\delta_{rr'})}(v) = \sum_{i=1}^n \underbrace{p_i^w(s,\delta_{rr'})}_\text{weighted quantile}\mathbbm{1}\{\text{score}_i \le v\} + \underbrace{p_{n+1}^w(s,\delta_{rr'})}_\text{weighted quantile}\mathbbm{1}\{\infty \le v\}, \label{eq:cp-ppi-F}
\end{equation}
\begin{equation}
p_i^w(s,\delta_{rr'}) =
\begin{cases}
  \frac{w(S_i,\Delta_{rr',i})}{\sum_{j=1}^nw(S_j,\Delta_{rr',j})+w(s,\delta_{rr'})} & \text{if $i \le n$,}\\
  \frac{w(s, \delta_{rr'})}{\sum_{j=1}^nw(S_j,\Delta_{rr',j})+w(s,\delta_{rr'})} & \text{if $i = n+1$.}
\end{cases} \label{eq:cp-ppi-p}  
\end{equation}
Additionally, $Q$ is a quantile function and $\text{score}_i^{(S_i, \Delta_{rr',i})}  = \Delta_{rr',i}$. We also note that typical conformal prediction methods do not provide coverage guarantees for individual samples. In our setting, however, the target of interest is $V^{\pi}(s)$, which is itself an expectation, so marginal coverage is sufficient. 

\vspace{0.5em}
\begin{remark}[Construction of the conformal band]
\itshape
The conformal band in \Cref{eq:cp-ppi-hatC} is constructed by applying weighted conformal prediction to the return differences $\Delta_{rr'}$. The weight $w(s,\delta_{rr'})$ plays the role of a likelihood-ratio correction, ensuring that calibration scores computed under the $\pi_b$ are properly reweighted to reflect the distribution induced by $\pi_e$.

The weighted empirical distribution function $F_n^{(s,\delta_{rr'})}$ aggregates these calibrated scores using normalized weights $p_i^w$, and includes an additional mass at $+\infty$ corresponding to the test point, as in standard conformal prediction. The confidence band $\hat{C}_{n,\alpha}(s)$ is then defined by the central $(1-\alpha)$ quantiles of this weighted distribution. By construction, this guarantees that a new return difference drawn under the target policy falls within the band with high probability.

Finally, adding the conformal band to the synthetic estimate in \Cref{eqn:cp_final_interval} propagates this uncertainty to the state-conditioned policy value $V^{\pi_e}(s)$, yielding a valid confidence interval that accounts for both the distribution shift observed in OPE and model bias.
\end{remark}

\begin{minipage}{\textwidth}
\small
\vspace{-1em}
\begin{algorithm}[H]
\small
\caption{\methodd{DR-PPI}}\label{alg:dr-ppi}
\begin{algorithmic}[1]
\Require Offline dataset $\mathcal{D}_{\pi_b}$, behavior policy $\pi_b$, target policy $\pi_e$.
\State Split $D_{\pi_b}$ (size $n$) into $D_1$ and $D_2$ (each with size $\frac{n}{2}$).
\State Fit a generative model $f_1$ using $D_1$.
\State Use $f_1$ to generate $N_f$ rollouts $\{\tilde{\tau}_i\}_{i=1}^{N_f}$ from $\pi_e$.
\State For each $\tau_j \in D_2$, use $f_1$ to generate $M$ rollouts $\{\tilde{\tau}_{m,j}\}_{m=1}^M$ with the same initial state $s_{0,j}$.
\State Estimate $\widehat V_{\text{DR-PPI}:1}$ using \Cref{eq:dr-ppi-hatV1}.
\State Fit a generative model using $D_2$, and estimate $\widehat V_{\text{DR-PPI}:2}$ in the same way.
\State Estimate $\hat{V}^{\pi_e}$ using \Cref{eq:dr-ppi-hatV}.
\State Estimate the variance of $\hat{V}^{\pi_e}$ using \Cref{eq:dr-ppi-varhatV}.
\State Provide confidence interval $\hat{C}_{\alpha}$ using \Cref{eq:dr-ppi-hatC}.
\end{algorithmic}
\end{algorithm}
\vspace{-1em}
\end{minipage}
\subsection{\methodd{DR-PPI}: Confidence Intervals for Unconditional OPE Value Estimation}
\label{sec:dr_ppi_methods}
In \Cref{sec:exp_cp-gen}, we derived a valid confidence interval for the initial-state conditioned policy value. Now, we study the more common task in OPE, which is to estimate the policy value averaged over the initial state distribution. A natural approach would be to aggregate the \method{CP-Gen} estimates across initial states, for example by applying a union bound. Unfortunately, this approach would result in confidence intervals that are impractically wide. Thus, we introduce a second estimator tailored to this setting, \methodd{DR-PPI}, which builds on ideas from doubly robust estimation and prediction-powered inference. Our goal is to construct an estimator of $V^{\pi_e} = \mathbb{E}_{s_0}[V^{\pi_e}(s_0)]$ with valid confidence intervals.

First, we assume that the initial state distribution $d_0$ is known (though our results extend to settings in which $d_0$ must be estimated). Now, we construct a cross‐fitted, doubly‐robust estimate of the policy value $V^{\pi_e}$ as follows. First, we split the behavior dataset $D_{\pi_b}$ into two equal parts, which we refer to as $D_1$ and $D_2$. We first use $D_1$ to fit a generative model $f_1$; this procedure is agnostic to the generative model used, and reasonable approaches include a diffusion model or a variational auto-encoder (VAE). Then, we use $f_1$ to generate $N_f$ rollouts $\{\tilde\tau_i\}_{i=1}^{N_f}$ where each rollout $\tilde\tau_i$ contains actions sampled from the target policy $\pi_e$. The rollouts are then used to calculate the model‐based return; however since we expect this return to be biased, we add a correction term using the trajectories observed in $D_2$ as follows:
\begin{equation}
\label{eq:dr-ppi-hatV1}
\widehat V_{\text{\methodd{DR-PPI}}:1}^{\pi_e}
= \underbrace{\frac{1}{N_f}\sum_{i=1}^{N_f}J(\tilde\tau_i)
\;}_\text{model-based term}+\;\underbrace{\frac{1}{n/2}\sum_{j\in D_2}\Bigl(\tilde{J}(\tau_j)
\;-\;\frac{1}{M}\sum_{m=1}^M J(\tilde\tau_{m,j}\mid s_{0,j})\Bigr)}_\text{correction term}.
\end{equation}
The correction term also uses synthetic trajectories. For each behavior trajectory $\tau_j$, we generate $M$ synthetic trajectories $\{\tilde{\tau}_{m,j}\}_{m=1}^M$, where each $\tilde{\tau}_{m,j}$ starts from the same initial state $s_{0,j}$ as $\tau_j$ and is generated using $f_1$, with actions sampled from $\pi_e$. 

In the \methodd{DR-PPI} estimator, $n$ is the number of original behavior trajectories, and $\tilde{J}(\tau_i)$ is the re-weighted return of the behavior trajectory $\tau_i$. There are several possible ways to perform this re-weighting: IS (e.g., $\tilde{J}(\tau_i) = \rho(\tau_i) J(\tau_i)$), weighted IS (WIS), and per-decision IS (PDIS). Because the trajectory $\tau_i$ arises from the behavior policy, the re-weighting technique allows us to estimate its value as if it was generated from the target policy. Each re-weighting technique involves different bias-variance tradeoffs that are well-studied in the literature, and the preferred choice will depend on the horizon length and dataset size of the specific application. Regardless of the re-weighting technique, our asymptotic theoretical results hold. 

% The importance-sampling based correction relies on the generation of an additional set of trajectories $\{\tilde{\tau}_{m,j}\}_{m=1}^M$, which all begin in the same initial state as the corresponding behavior trajectory $\tau_j$ but are instead generated from the target policy. The correction over $M$ generated trajectories does not need to be re-weighted because the trajectories are generated using $f_1$ which samples trajectories from  $\pi_e$. 

To ensure that the data is used efficiently, we use cross-fitting~\citep{chernozhukov2018double} with two splits of the data.
$\widehat{V}_{\text{\methodd{DR-PPI}}:1}$ uses $D_1$ to fit the generative model $f_1$ and uses $D_2$ to provide the correction. Similarly, we fit the generative model on $D_2$ to produce $f_2$ and correct the estimator using $D_1$, which yields $\widehat{V}_{\text{\methodd{DR-PPI}}:2}$. The final estimate (\Cref{alg:dr-ppi}) is the average of $\widehat{V}_{\text{\methodd{DR-PPI}}:1}$ and $\widehat{V}_{\text{\methodd{DR-PPI}}:2}$. The variance of the estimator can then be calculated by combining plug-in estimates of the variance of the model-based term and the correction term for each dataset split.

In particular, we average the outcomes of  $\widehat{V}_{\text{\methodd{DR-PPI}}:1}$ and $\widehat{V}_{\text{\methodd{DR-PPI}}:2}$ as follows, 
\begin{equation}
\label{eq:dr-ppi-hatV}
    \widehat{V}_{\text{\methodd{DR-PPI}}} = (\widehat{V}_{\text{\methodd{DR-PPI}}:1} + \widehat{V}_{\text{\methodd{DR-PPI}}:2}) / 2.
\end{equation}

The variance of the estimator can be calculated using plug-in estimates as follows, 
\begin{equation}
\label{eq:dr-ppi-varhatV}
\mathbb{V}\left[\widehat{V}_{\text{\methodd{DR-PPI}}}\right]
=\frac{1}{4}\Bigl(\frac{\widehat\sigma_{f_1}^2}{N_f}+\frac{\widehat\sigma_{b_1}^2}{n/2}
            +\frac{\widehat\sigma_{f_2}^2}{N_f}+\frac{\widehat\sigma_{b_2}^2}{n/2}\Bigr),
\end{equation}
where $\sigma_f^2 = \mathbb{V}_{\tilde{\tau}\sim \tilde{p}^{\pi_e}}\left[J(\tilde{\tau})\right]$, and $\sigma_b^2 = \mathbb{V}_{\tau \sim p^{\pi_b}, \tilde{\tau} \sim \tilde{p}^{\pi_e}}\left[\tilde{J}(\tau) - \frac{1}{M}\sum_{m=1}^M J(\tilde{\tau}_m|s_{0}(\tau))\right]$.

% (details in Appendix \ref{apd:drppi}). 
Using this variance, an approximate CI for a given choice of coverage $(1-\alpha)$
is
\begin{equation}
\label{eq:dr-ppi-hatC}
\widehat C_\alpha
=\widehat V^{\pi_e}_{\text{\methodd{DR-PPI}}}\pm z_{1-\alpha/2}\,\sqrt{\mathbb{V}\left[\widehat{V}^{\pi_e}_{\text{\methodd{DR-PPI}}}\right]}\,
\end{equation}
where $\mathbb{V}\left[\widehat{V}^{\pi_e}_{\text{\methodd{DR-PPI}}}\right]$ is the variance of the OPE estimate learned by \methodd{DR-PPI}, and $z_{1-\alpha/2}$ is the z-score corresponding to the $1-\alpha/2$ quantile of the standard normal distribution.

We observe that \methodd{DR-PPI} differs from traditional PPI in two key ways. First, in PPI, one typically has access to large quantities of unlabeled data, and an ML model is used to predict labels for these samples. In contrast, in our setting, the ML model (e.g., a generative model) is used to produce new samples, which can then be comparatively easily labeled via a reward function.
Second, our setting involves distribution shift; we observe trajectories generated by a behavior policy, while our goal is to infer the value of a target policy that induces a different trajectory distribution. In contrast, standard PPI assumes that labeled and unlabeled samples are drawn from the same underlying distribution.

Before discussing the practical considerations when fitting the estimators, we first compare their constructions. When all trajectories in an environment begin from the same initial state, the point estimates of both methods are identical, differing only in their confidence intervals. The re-weighting schemes, however, are distinct: \methodd{DR-PPI} re-weights only the real behavior trajectories, whereas \method{CP-Gen} uses a product of IPS ratios averaged across a set of trajectories. Finally, the return differences used to compute the CI in \method{CP-Gen} may exhibit higher variance than subtracting the mean of a set of trajectories from the return of a single trajectory in \methodd{DR-PPI}. However, this effect depends on the stochasticity of the generated trajectories and may vary across domains.

\subsection{Practical considerations}
\label{sec:practical}
There are several practical considerations to enable OPE in environments with large state and action spaces as well as settings in which $\pi_b$ and $\pi_e$ differ substantially. First, it is occasionally necessary to clip the largest IPS ratios to avoid extremely large intervals.~\citet{truncated_is} shows that using a clip constant set to $n^{1/2}$ where $n$ is the number of dataset samples, provides an optimal first order rate in the resulting mean-squared error of the OPE estimator, balancing the bias introduced by the clipping with the variance reduction. This clipping constant also ensures the resulting estimate is still consistent. Following this, we set the clipping constant at a rate of $n^{1/2}$.  We note that clipping typically introduces an additional bias to the theoretical results, which we do not account for in this work and leave to future work. 

Additionally, in \Cref{alg:dr-ppi}, we propose splitting the behavior dataset into two portions and aggregating the OPE estimate calculated using each portion. If a pre-trained generative model is available, we use the full dataset to construct the CI, and no data splitting for generative model training is necessary. However, if no pre-trained model exists, we divide the data and use one half train the generative model, and the other half to compute the CI. Because these two subsets are independent, this preserves the exchangeability criterion for conformal prediction and the validity of \methodd{DR-PPI}. However, in practice, it may not possible to split the behavior dataset due to its size. For these settings, we choose not to perform cross-fitting, and instead report results without sub-dividing the dataset. As discussed in \Cref{sec:experiments}, this can still result in valid, but higher variance CIs.

Finally, for \method{CP-Gen}, we must set $\epsilon_s$ and $\epsilon_r$ depending on the environment. We view $\epsilon_s$ and $\epsilon_r$ as hyperparameters that need. One way to do this is via cross-validation, where we split the behavior dataset into training and validation sets, and choose the $\epsilon_r, \epsilon_s$ that yields the most accurate estimate of the value function $V^{\pi_b}(s)$ on the validation set.

\section{Theoretical Results}
\label{sec:theory}
Now, we discuss the theoretical guarantees of our approaches. As is standard in prior OPE literature, we assume that the target and behavior policies share common support, and that the instantaneous rewards and IPS ratios are bounded~\citep{farajtabar2018robust,thomas2016dataefficient}.
% \begin{assumption}[Bounded state- and action- spaces]
% \label{asm:bounded_sa_spaces}
% $\|s\| \le C_s, \|a\| \le C_a$.
% \end{assumption}
\subsection{\method{CP-Gen} produces valid conformal prediction intervals}
We make a few additional assumptions to analyze \method{CP-Gen}. These assumptions balance theoretical rigor with practical relevance, allowing us to derive meaningful guarantees settings with continuous state spaces. Importantly, they still encompass a broad class of real-world MDPs. 
\begin{assumption}[Lipschitz Continuity of the Policy]
\label{asp:lipschitz_policy}
There exist constants $L_{\pi}, L_{\pi, s}, L_{\pi, a}$ such that for $\pi \in \{\pi_b, \pi_e\}$ and all $s,s_1\in\mathcal S, a, a_1 \in\mathcal A$,
\begin{equation}
TV(\pi(\cdot|s), \pi(\cdot|s_1)) \le L_{\pi}||s-s_1||
\end{equation}
% and
\begin{equation}
| \pi(a|s)-\pi(a_1|s_1)| \le L_{\pi,s}||s-s_1|| + L_{\pi,a}||a-a_1||.
\end{equation}
\end{assumption}
\begin{assumption}[Lipschitz Transition Dynamics]
\label{asm:lipschitz_transition}
%The dynamics kernel satisfies 
For all $s,s_1\in\mathcal S$, $a, a_1\in\mathcal A$,
\begin{equation}
    TV(p(\cdot|s,a),p(\cdot|s_1,a_1)) \le L_{p,s}\|s-s_1\| + L_{p,a}\|a-a_1\|.
\end{equation}
\end{assumption}
\begin{assumption}[Score Smoothness]
\label{asm:reward_smoothness}
The map $(s,\delta_{rr'})\;\mapsto\;w(s,\delta_{rr'})$ 
is $L_r$‐Lipschitz in its second argument: $
|w(s,\delta_{rr'})-w(s,\delta_{rr'}')| \le L_r|\delta_{rr'}-\delta_{rr'}'|$.
\end{assumption}
We consider \Cref{asp:lipschitz_policy,asm:lipschitz_transition} mild. In most cases, \Cref{asp:lipschitz_policy} holds with a sufficiently large Lipschitz constant; in practice, these constants are small when similar states are assigned similar actions, a condition often justified in domains like healthcare, where similar patients receive similar treatments. A comparable assumption has been studied in prior work~\citep{liu2022offlinepolicyoptimizationeligible}. Similarly, \Cref{asm:lipschitz_transition} is a smoothness assumption on the transition dynamics which has been used in prior work~\citep{asadi2018lipschitzcontinuitymodelbasedreinforcement}. 
For example, in healthcare, patients with comparable clinical profiles often respond similarly to similar treatments; small changes in dosage or patient characteristics typically produce gradual, not abrupt, differences in outcomes. 

\Cref{asm:reward_smoothness} requires that the return differences between trajectories are smooth in their expected IPS ratios. However, in domains with a large number of samples, where we can use a more fine-grained $\epsilon_r$, the Lipschitz assumption here (which is multiplied by $\epsilon_r$ in our theoretical bound) will have much less impact. Overall, our assumptions are used to  account for potential errors introduced by $\epsilon$-approximation, used in large or continuous state spaces and ensure that the resulting averaging error is bounded. 

Under the stated assumptions, we now demonstrate that \method{CP-Gen} produces valid conformal prediction intervals within a small margin of error (\Cref{thm:valid_cpppi}, proof in Appendix \ref{apd:proofs}). 

\begin{theorem}[\method{CP-Gen} calculates a valid conformal prediction interval]
\label{thm:valid_cpppi}
% M --> d
% r --> k
Under \Cref{asp:lipschitz_policy,asm:reward_smoothness,asm:lipschitz_transition}, suppose that $\mathbb{E}_{\pi_b}[|\hat{w}_\epsilon(S,\Delta_{rr'})|^k] \leq d^{2k}$  for some $k \ge 2$ and finite $d$. 
% Define $\text{Diff}(B(s_1,\delta_1,\epsilon_1),B(s_2,\delta_2,\epsilon_2))$ to be the difference of two balls.
The conformal band has a lower bounded coverage
\begin{equation}
P^{\pi_e}(\Delta_{rr'} \in \hat{C}_{n,\alpha}(S)) \ge 1 - \alpha - \Delta_w,
\end{equation}
where $\Delta_w = \frac{1}{2}\mathbb{E}^{\pi_b}|\hat{w}_\epsilon(S,\Delta_{rr'}) - w(S,\Delta_{rr'})|$ is the estimation error with scale 
\begin{align}
    \Delta_w = \tilde{\mathcal{O}}(n^{-1/2}\epsilon_s^{-3d_s/2}\epsilon_r^{-3/2} + \epsilon_s + \epsilon_r),
\end{align}
where $d_s$ is the dimension of $\mathcal{S}$.

In addition, if the non-conformity scores $\{V_i\}_{i=1}^{n}$ have no ties almost surely, then 
\begin{equation}
P^{\pi_e}(\Delta_{rr'} \in \hat{C}_{n,\alpha}(S)) \le 1 - \alpha - \Delta_w + cn^{1/k-1}
\end{equation}
for some positive constant $c$ depending on $d$ and $k$ only.
\end{theorem}

\begin{remark}[Implications]
\itshape
\Cref{thm:valid_cpppi} shows that 
$\epsilon$-approximation results in a loss of coverage specified by 
$\Delta_w$, which depends primarily on $\epsilon_s$ and $\epsilon_r$. In environments where these constants are small, or there are a large number of samples, or $\epsilon_s, \epsilon_r$ are optimally selected, we can get a smaller loss of coverage. We also note that the guarantee is similar in form to prior conformal intervals for MDPs \citep{foffano2023conformaloffpolicyevaluationmarkov}, but our construction has significant benefits over prior work: it can leverage synthetic data and allows us to compute conformal bands for continuous states with our approximation of $w$. 
\end{remark}

\subsection{\methodd{DR-PPI} produces asymptotically valid confidnece intervals}
In \Cref{sec:dr_ppi_methods}, we mention several choices for the importance-sampling correction including IS, WIS, and PDIS. 
Regardless of the correction style, we achieve asymptotically valid CIs (\Cref{thm:valid_drppi}, proof in Appendix \ref{apd:proofs}).

\begin{theorem}[\methodd{DR-PPI} calculates an asymptotically valid CI] For all possible corrections $\tilde{R}_{\text{IS}}$, $\tilde{R}_{\text{WIS}}$ and $\tilde{R}_{\text{PDIS}}$,
    \begin{equation}
    \liminf_{n,M,N_f \to \infty}P(V^{\pi_e} \in \hat{C}_{\alpha}) \ge 1 - \alpha.
    \end{equation}
    \label{thm:valid_drppi}
\end{theorem}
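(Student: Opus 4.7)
The plan is to establish asymptotic normality of $\widehat V_{\text{DR-PPI}}$ centered at $V^{\pi_e}$ with a consistent variance estimator, and then conclude via Slutsky's theorem. First, I would analyze one cross-fit half, say $\widehat V_{\text{DR-PPI}:1}$, conditional on $D_1$ (hence on the fitted generative model $f_1$). Under \Cref{asm:common-support,asm:bounded_ips_weights,asm:bounded_reward}, the standard IS identity gives $\mathbb{E}_{\tau\sim\pi_b}[\tilde R_{\text{IS}}(\tau)] = V^{\pi_e}$, and likewise for PDIS; for WIS the same identity holds up to an $O(1/n)$ bias. Simultaneously, conditional on $f_1$,
$$\mathbb{E}\!\left[\tfrac{1}{N_f}\sum_{i=1}^{N_f} R(\hat\tau_i)\,\Big|\, f_1\right] = \tilde V_{f_1}^{\pi_e}, \qquad \mathbb{E}\!\left[\tfrac{1}{M}\sum_{m=1}^M R(\hat\tau_{m,j}\mid s_{0,j})\,\Big|\, f_1, s_{0,j}\right] = \tilde V_{f_1}^{\pi_e}(s_{0,j}),$$
so when we marginalize over $s_{0,j}\sim d_0$ the correction term cancels the model term in expectation and yields $\mathbb{E}[\widehat V_{\text{DR-PPI}:1}\mid f_1] \to V^{\pi_e}$.

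Second, I would invoke a conditional CLT on each of the three empirical averages comprising $\widehat V_{\text{DR-PPI}:1}$: the iid rollouts from $f_1$ (over $i$), the iid offline correction term using $D_2$ (over $j$, independent of $f_1$), and the iid per-initial-state model averages (over $(m,j)$ conditional on $f_1$ and $s_{0,j}$). Boundedness of rewards (\Cref{asm:bounded_reward}) and of IPS ratios (\Cref{asm:bounded_ips_weights}) guarantees uniformly finite second moments so Lindeberg conditions hold, giving
$$\sqrt{n}\bigl(\widehat V_{\text{DR-PPI}:1} - V^{\pi_e}\bigr) \xrightarrow{d} \mathcal{N}(0, \sigma_1^2),$$
and symmetrically for $\widehat V_{\text{DR-PPI}:2}$. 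For the average $\widehat V_{\text{DR-PPI}}$, the two halves do share raw data but in swapped roles; standard cross-fitting arguments \citep{chernozhukov2018double} show the covariance contribution vanishes (e.g., by bounding $|\mathrm{Cov}|$ using the decoupled roles and the boundedness assumptions), so joint asymptotic normality follows. Third, the plug-in variance terms $\widehat\sigma_{f_k}^2, \widehat\sigma_{b_k}^2$ are sample variances of bounded random variables and hence are consistent by the weak law of large numbers, making $\widehat{\mathbb V}[\widehat V_{\text{DR-PPI}}]$ consistent for the true asymptotic variance. Slutsky's theorem then yields $(\widehat V_{\text{DR-PPI}} - V^{\pi_e})/\sqrt{\widehat{\mathbb V}[\widehat V_{\text{DR-PPI}}]} \xrightarrow{d} \mathcal{N}(0,1)$, which directly gives $P(V^{\pi_e}\in\widehat C_\alpha) \to 1-\alpha$.

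The main obstacle is the WIS case, whose $O(1/n)$ finite-sample bias must be shown to be asymptotically negligible relative to the $O(1/\sqrt n)$ standard deviation. I would handle this by writing $\tilde R_{\text{WIS}}(\tau_j) = \tilde R_{\text{IS}}(\tau_j)/\bar\rho$ where $\bar\rho \to 1$ by LLN, expanding around $\bar\rho = 1$, and applying \Cref{asm:bounded_ips_weights} together with uniform integrability to conclude that the bias/SD ratio vanishes and the limit distribution remains centered at $V^{\pi_e}$. A secondary subtlety is that $f_k$ need not converge to the true dynamics, so $\sigma_k^2$ may reflect model misspecification; crucially, this does not affect validity because the variance estimator is computed from the data used in the estimator itself and thus tracks whatever the true asymptotic variance is, so coverage remains at least $1-\alpha$ regardless of model quality.
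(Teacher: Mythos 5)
Your proposal is correct and follows the same underlying logic as the paper's proof, but it is a self-contained re-derivation of machinery the paper simply cites. The paper's argument is essentially two lines plus two references: it verifies the exact unbiasedness identities $\mathbb{E}^{\pi_b}[\tilde R_{\text{IS}}(\tau)]=\mathbb{E}^{\pi_b}[\tilde R_{\text{PDIS}}(\tau)]=V^{\pi_e}$ and then declares the result ``a direct consequence of Proposition~1'' of \citep{angelopoulos2023predictionpoweredinference}, which packages exactly the steps you spell out (conditional cancellation of the model-based term against the per-initial-state rollout averages, CLT for the bounded iid averages, consistency of the plug-in variances, Slutsky); for WIS it cites \citep{powell1966weighted} for the $O(1/n)$ bias and notes that $O_p(1/\sqrt n)+O(1/n)=O_p(1/\sqrt n)$, which is the same resolution you propose via the expansion of $\bar\rho$ around $1$. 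What your route buys is transparency: you make explicit the conditioning on $f_1$, the role of \Cref{asm:bounded_reward,asm:bounded_ips_weights} in securing the Lindeberg/second-moment conditions, and—importantly—the cross-fitting step, i.e., combining $\widehat V_{\text{DR-PPI}:1}$ and $\widehat V_{\text{DR-PPI}:2}$, which the cited PPI proposition (a single-split result) does not literally cover and which the paper's proof passes over in silence; note, though, that your claim that the cross-fold covariance ``vanishes'' deserves more care than a citation to \citep{chernozhukov2018double}, since here $f_1,f_2$ are not assumed to converge to the true dynamics, so the usual DML argument that nuisance effects are asymptotically negligible does not apply verbatim—one must argue directly about the covariance of the two folds' correction terms (the paper's variance formula in \Cref{eq:dr-ppi-varhatV} implicitly assumes this cross term is negligible or nonpositive). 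That caveat aside, your conditional-unbiasedness-plus-CLT-plus-Slutsky skeleton is precisely what underlies the paper's citation-based proof.
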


\begin{remark}[Implications]
Theorem~\ref{thm:valid_drppi} guarantees that \methodd{DR-PPI} produces confidence intervals with correct asymptotic coverage even when the generative model is misspecified. In particular, incorporating synthetic trajectories does not compromise validity, provided the importance-sampling correction is consistent. Different choices of correction affect efficiency in finite samples, but does not affect asymptotic coverage. Furthermore, the normal approximation underlying the confidence interval is well-suited to state value estimates, which are bounded and averaged across the state distribution; a Student-t correction could be adopted as a drop-in replacement for small samples if warranted by the application.
\end{remark}

\section{Empirical Results}
\label{sec:experiments}
Our theoretical analyses demonstrated that \method{CP-Gen} and \methodd{DR-PPI} can calculate valid CIs under mild assumptions. To complement this analysis, we seek to answer the following questions using empirical results: \textbf{1)} Does the $\epsilon$-approximation used in \method{CP-Gen} cause the estimated interval to be biased? \textbf{2)} Do \methodd{DR-PPI} and \method{CP-Gen} produce intervals that cover the ground truth policy value? \textbf{3)} Under what conditions do the \methodd{DR-PPI} estimates outperform baseline approaches?

\subsection{Datasets}
To answer our empirical questions, we use the following domains.

\textbf{Inventory Control}~\citep{foffano2023conformaloffpolicyevaluationmarkov}: We adapt this simulator to accommodate a continuous state and reward space. \\
\textbf{Sepsis}~\citep{oberst2019counterfactualoffpolicyevaluationgumbelmax}: %Here, we use a semi-synthetic sepsis treatment simulator with a discrete state and action spaces. 
In this popular sepsis simulator, the goal is to successfully discharge a simulated patient. 
We approximate the dynamics using a feed-forward network. \\
\textbf{D4RL HalfCheetah}~\citep{fu2020d4rl}: The HalfCheetah environment is a Mujoco task in the D4RL suite where the goal is to get the cheetah to move forward.
Here, we approximate the dynamics using a variational auto-encoder (VAE)~\citep{gao2023variational}. \\
\textbf{MIMIC-IV}~\citep{mimicivdataset,physionet}: We consider a subset of patients from MIMIC-IV that receive potassium repletion. 
% The patient state is a 20D vector containing information about labs, administered medicines, and static covariates; each state represents a 4-hour interval in a patient's hospital stay. There are five actions, each corresponding to a dosage of potassium. The reward function is a binary indicator of whether the patient's potassium lab value is within the reference range post-repletion. 
To emulate a setting in which we have access to both a behavior and target cohort, we construct two sub-cohorts. The behavior sub-cohort consists of patients who receive lower dosages (<20 mEq/L), and the target sub-cohort consists of patients who receive higher dosages (>= 20mEq/L). We use a VAE to generate synthetic trajectories. Our goal is to learn the value of the target policy (i.e., repletion strategy in the higher-dosage cohort). 

The domains we consider range in complexity and relevance to real-world settings. This breadth allows us to understand how specific environmental factors, including simulator quality and the size of the state and action space, impact estimator performance.
\subsection{Baselines}
In addition to the baseline proposed in ~\citet{foffano2023conformaloffpolicyevaluationmarkov}, we compare to the following approaches:
\textbf{Importance Sampling (IS)}: We use standard IS, deriving a bound using central limit theorem (CLT) or bootstrapping. \\
\textbf{Augmented Importance Sampling (AugIS)}: We use both the original dataset and a set of synthetic trajectories to calculate an IS estimate, with bounds estimated using either CLT or boostrapping. \\
\textbf{Direct Method (DM)}: We use rollouts from the learned model and calculate the expectation of the trajectory returns. DM estimates do not produce CIs.\\
\textbf{Doubly Robust (DR)}: Here, we compute a DR estimated using DQL to learn the reward model. \\
\textbf{Augmented Doubly Robust (AugDR)}: Here, we use both offline trajectories and synthetic trajectories to learn a Deep Q-learning (DQL) reward model and then compute a DR estimate. \\
\textbf{Q-Bootstrap}: Here, we fit a $Q$-function using the behavior dataset and use it to learn a bootstrapped estimate of $V^{\pi_e}(s)$.
% \begin{figure}
%     \centering
%     \includegraphics[width=\linewidth]{}
%     \caption{\textbf{DR-PPI produces valid confidence intervals across three domains. } We report intervals calculated by all baselines, and indicate the true policy value using a dashed line. }
%     \label{fig:intervals}
%     % \vspace{-3em}
% \end{figure}

% \jason{Q rollouts match the mean value of DR}

% \begin{table*}[h]
%     \centering
%     \small
%     % \renewcommand{\arraystretch}{1.1}
%     \begin{tabularx}{\textwidth}{Xlllll}
%         \toprule
%         % \rowcolor{gray!20}
%         \textbf{Environment} & $\bf{V^{\pi_e}(s)}$ & \textbf{DM Baseline} & \textbf{CP-Gen CI} & \textbf{CI Coverage?} & \textbf{CI Length} \\
%         \midrule
%         Inventory Control & 2848.80 & 2864.08 & (322.21, 4825.29) & Yes & 4503.08 \\
%         \midrule
%         Sepsis & -0.40 & -0.12 & (-1.36, 0.54) & Yes & 1.9 \\
%         \midrule
%         D4RL Half Cheetah & 1990.39 & 1393.98 & (1964.35, 2004.42) & Yes & 40.07  \\
%         \midrule
%         MIMIC-IV & 1 & 0.689 & (0.977, 1.012) & Yes & 0.035 \\
%         \midrule
%     \end{tabularx}
%     \caption{\textbf{CP-Gen outperforms the DM-style baseline} across all empirical domains, producing conformal prediction intervals that cover the true policy value, $V^{\pi_e}(s)$. For the CP-Gen results, we report the confidence interval, an indicator of whether the interval covers the $V^{\pi_e}(s)$, and the length of the confidence interval with $\alpha=0.05$.}
%     \label{tab:cpppi_perf}
%     \vspace{-10 pt}
% \end{table*}

\begin{table*}[h!]
\centering
\scriptsize % shrink the font size (can also try \small)
\renewcommand{\arraystretch}{1.2} % add a bit of row padding
\begin{tabular}{|l| c| c | c | c | c |}
\hline
\rowcolor{gray!15}
\textbf{Setting} & \makecell{\textbf{Ground truth}\\$\mathbf{V^{\pi_e}(s)}$} & \makecell{\textbf{DM}\\(Point Estimate)} & 
\makecell{\textbf{Foffano et al.}\\Interval Length} & 
\makecell{\textbf{Q-bootstrap}\\Interval Length} & 
\makecell{\textbf{\method{CP-Gen}}\\Interval Length} \\
\hline
Inventory         
& -412.85        
& -120.57     
& \makecell{8550.00}      
& \makecell{520.64*}  
& \makecell{\bf{5531.60}} \\

Sepsis            
& -0.40          
& -0.12       
& \makecell{\bf{1}}               
& \makecell{0.02*}       
& \makecell{1.90} \\

D4RL Half Cheetah 
& 1990.39        
& 1393.98     
& \makecell{190.00}          
& \makecell{60.00*}        
& \makecell{\bf{40.07}} \\

MIMIC-IV          
& 1              
& 0.689       
& \makecell{1.00}                
& \makecell{2.20}       
& \makecell{\bf{0.13}} \\
\hline
\end{tabular}
\caption{\textbf{\method{CP-Gen} outperforms baselines across domains with continuous state-spaces}, producing conformal prediction intervals for policy value estimation. For methods that produce an interval, we report the interval length for $\alpha=0.05$. The interval length that is shortest that also covers the ground truth policy value $V^{\pi_e}(s)$ is in \textbf{bold}. Intervals that do not cover the ground truth policy are marked with an asterisk (*). }
\label{tab:cpppi_perf}
\end{table*}
\begin{table*}[h!]
\centering
\scriptsize
\renewcommand{\arraystretch}{1.2}
\setlength{\tabcolsep}{3pt}
\begin{tabular}{|l|c|c|c|c|c|c|c|c|c|}
\hline
% apply column-level color via multicolumn so it fills entire header cell
\multicolumn{1}{|>{\columncolor{gray!15}}l|}{\textbf{Setting}} &
\multicolumn{1}{>{\columncolor{gray!15}}c|}{$\mathbf{V^{\pi_e}}$} &
\multicolumn{1}{>{\columncolor{gray!15}}c|}{\textbf{\shortstack{IS\\(CLT)}}} &
\multicolumn{1}{>{\columncolor{gray!15}}c|}{\textbf{\shortstack{IS\\(Bootstrap)}}} &
\multicolumn{1}{>{\columncolor{gray!15}}c|}{\textbf{\shortstack{AugIS\\(CLT)}}} &
\multicolumn{1}{>{\columncolor{gray!15}}c|}{\textbf{\shortstack{AugIS\\(Bootstrap)}}} &
\multicolumn{1}{>{\columncolor{gray!15}}c|}{\textbf{\shortstack{DR\\(CLT)}}} &
\multicolumn{1}{>{\columncolor{gray!15}}c|}{\textbf{\shortstack{AugDR\\(CLT)}}} &
\multicolumn{1}{>{\columncolor{gray!15}}c|}{\textbf{DM}} &
\multicolumn{1}{>{\columncolor{gray!15}}c|}{\textbf{\shortstack{\methodd{DR-PPI}}}} \\
\hline
Inventory & -428.51 & \makecell{1929.7}  & \makecell{1982.14}  &  \makecell{54.66*} & \makecell{49.68*}  &  \makecell{2744.46} & \makecell{107.22*}  & -100.53 &  \makecell{\bf{1918.38}}  \\
Sepsis            & -0.56  & \makecell{1.58}  &  \makecell{1.48} &  \makecell{0.008*} &  \makecell{0.007*} &  \makecell{1.23} & \makecell{1.272e+11}  & \makecell{-0.4}  &  \makecell{\bf{1.19}}    \\
\makecell{D4RL Half \\ Cheetah} & 1975.75  & \makecell{281.88}  &  \textbf{\makecell{271.67}} & \makecell{151.93*} & \makecell{142.14*}  & \makecell{5.514e+31}  &  \makecell{1.17e+32} & 1423.57  &   \makecell{281.81}   \\
MIMIC-IV          &  0.746 & \makecell{1.19}  & \textbf{\makecell{1.09}}  & \makecell{0.008*}  & \makecell{0.007*}  & \makecell{7.566e+21}  & \makecell{0.011*}  & 0.69  & \makecell{1.19} \\
\hline
\end{tabular}
\caption{\textbf{\methodd{DR-PPI} produces valid confidence intervals across all domains. } For methods that produce an interval, we report interval lengths for the same coverage ($\alpha=0.05$), and \textbf{bold} the interval with the smallest size that also covers the ground truth policy value $V^{\pi_e}$. Intervals that do not cover the ground truth value of the policy are marked with an asterisk (*). }
\label{tab:drppi_perf}
\vspace{-1em}
\end{table*}

\subsection{Results}
\textbf{\method{CP-Gen} produces valid CP intervals}. As discussed in \Cref{sec:methods}, to scale prior conformal prediction approaches to large MDPs, we use an $\epsilon$-approximation strategy. Despite this approximation, we find that \method{CP-Gen} still results in conformal prediction intervals that cover $V^{\pi_e}(s)$ with the specified confidence level, often with a smaller interval size than baseline approaches (\Cref{tab:cpppi_perf}, full intervals reported in \Cref{apd_tab:cppi_perf}). We compare to a DM-style baseline where we average the return of synthetic trajectories that start in the given initial state. We find that the DM baseline can produce a biased result with a poor generative model (e.g., in D4RL, MIMIC-IV). We also evaluate the baseline reported in  ~\citet{foffano2023conformaloffpolicyevaluationmarkov}. This baseline covers the ground truth value but produces wider intervals than \method{CP-Gen} in all environments with continuous state spaces (e.g., all environments except Sepsis). 
These results suggest that \method{CP-Gen} newly enables conformal prediction for OPE in MDP settings with large or continuous state and action spaces. 

\begin{table}[h!]
\centering
\begin{tabular}{|c|c|c|c|}
\hline
Setting & Method & Coverage Rate & Average Length of Interval \\
\hline
Inventory & \method{CP-Gen} & 98\% & 5576.85  \\
Inventory & \methodd{DR-PPI} & 96\% & 1951.14  \\
Sepsis & \method{CP-Gen} & 92\% & 1.442 \\
Sepsis & \methodd{DR-PPI} & 96\% & 1.172  \\
D4RL Half Cheetah & \method{CP-Gen} & 92\% &  64.77 \\
D4RL Half Cheetah & \methodd{DR-PPI} & 96\% &  291.95 \\
MIMIC-IV & \method{CP-Gen} & 94\% & 0.211 \\
MIMIC-IV & \methodd{DR-PPI} & 98\% & 1.163 \\
\hline
\end{tabular}
\caption{\textbf{Empirical coverage rates across all domains for \method{CP-Gen} and \methodd{DR-PPI}.} We report coverage rates (out of 25 iterations for Half Cheetah, out of 50 iterations for the other three settings) corresponding to $\alpha=0.05$.}
\label{tab:emprical_coverage_rate}
\end{table}
\vspace{-0.8em}

\textbf{\methodd{DR-PPI} identifies valid confidence intervals that cover $V^{\pi_e}$ across all domains}. Across all domains, \methodd{DR-PPI} produces valid CIs that cover $V^{\pi_e}$, as our theoretical results suggest (\Cref{tab:drppi_perf}). In contrast, most baseline approaches either have wide CIs or have intervals that do not cover $V^{\pi_e}$. In fact, any baseline approach that uses generated trajectories (e.g., AugIS, AugDR) produces a biased interval, which suggests that naively incorporating auxiliary synthetic trajectories results in a biased estimator. Furthermore, we find that in the D4RL and MIMIC-IV settings, DQL is unable to identify an accurate Q function; as a result, the CIs become exponentially large.

\textbf{\methodd{DR-PPI} performs best in stochastic domains with high quality generative models}.
Finally, we clarify the settings in which \methodd{DR-PPI} outperforms baselines. When the environment is deterministic (e.g., D4RL HalfCheetah), or the generative model is of poor quality (e.g., MIMIC-IV) \methodd{DR-PPI} performs similarly to 
the IS baselines. In such settings we do not get a favorable variance reduction from the synthetic trajectories because they are often highly deterministic. In contrast, in settings where the environment is stochastic and our learned generative model is good (e.g., Inventory, Sepsis), \methodd{DR-PPI} has tighter CIs. Given that both IS with bootstrapping and \methodd{DR-PPI} produce valid CIs, we recommend a simple rule: use the estimator with the narrower interval. We defer a rigorous selection criterion to future work.

\textbf{Empirical coverage rates. } Finally, we investigate empirical coverage rates for both methods in the all settings (\Cref{tab:emprical_coverage_rate}).  We note that in all settings, \methodd{DR-PPI} covers the ground truth value of the policy, and that \method{CP-Gen} achieves the requested coverage in Inventory. We believe that the slight loss of coverage for \method{CP-Gen} in the Sepsis, D4RL, and MIMIC-IV settings is due to a higher 
$\Delta_w$. For example, the Sepsis environment, due to its discrete state and reward space, exhibits weak Lipschitz continuity, with a large Lipschitz constant. Furthermore, in this setting, 
$C_{ips}$, the upper bound of the IPS ratios, is large given that the target and behavior policies are quite distinct. As suggested in Theorem 1, these two factors contribute to a higher 
$\Delta_w$, which results in a small loss of coverage.

\section{Conclusion}
Here, we take steps toward uncertainty-aware OPE in settings that combine real and synthetic trajectories. We present two complementary approaches, \method{CP-Gen} and \methodd{DR-PPI}, that use auxiliary data to construct CIs for OPE. \method{CP-Gen} calculates state-conditioned policy values, while \methodd{DR-PPI} estimates unconditional policy values. We provide theoretical guarantees (\Cref{sec:theory}) and examine behavior in four empirical domains(\Cref{sec:experiments}). Our results illustrate that obtaining valid CIs for OPE with auxiliary data is feasible across a variety of domains, from fully synthetic settings to real-world EHR data.

\textbf{Limitations and future work.} Our work considers settings with continuous state spaces of moderate dimension. When applied to higher-dimensional settings, the choice of $\epsilon_s, \epsilon_r$ becomes increasingly consequential as poor choices can inflate $\Delta_w$, thus reducing coverage rates. Guidance on selecting these parameters is provided in  \Cref{sec:practical}. Future work can explore developing more principled procedures for setting $\epsilon_s$ and $\epsilon_r$, alternative classes of generative models such as diffusion models, and investigate strategies to mitigate the impact of poor-quality generated trajectories. More broadly, we see value in analyzing these approaches under distribution shift and partial observability.

\subsubsection*{Broader Impact Statement}
In this work, we propose two strategies to estimate confidence intervals for off-policy evaluation when used with both real and synthetic data. We believe this work is foundational and acknowledge that it has the potential to improve the downstream application of RL policies in high-stakes domains. 

\subsubsection*{Acknowledgments}
The authors would like to thank Matthew J\"orke, Shengpu Tang, John Duchi, and Alex Nam for feedback on early versions of this manuscript. AM was funded in part by a Stanford Data Science fellowship. BEE was funded in part by grants from the Parker Institute for Cancer Immunology (PICI), the Chan-Zuckerberg Institute (CZI), the Biswas Family Foundation, NIH NHGRI R01 HG012967, and NIH NHGRI R01 HG013736.
BEE is a CIFAR Fellow in the Multiscale Human Program. BEE is on the Scientific Advisory Board for ArrePath Inc, GSK AI for Cancer, and Freenome; she consults for Neumora.

\bibliography{cleaned_bibliography_no_urls}
\bibliographystyle{tmlr}
\newpage
\appendix
\section{Additional Empirical Results}
\label{apd:empirical_results}
First, we discuss additional empirical results. In the main text, we report \method{CP-Gen} and \methodd{DR-PPI} results across all domains. First, we include results in the Inventory setting that investigate the performance of our methods if the behavior policy $\pi_b$ is unknown. We report coverage rates across 50 trials, and define the estimation error in terms of $\epsilon_x$. Here, the true behavior policy is static and uniform across all actions $[1/11 \text{ for - in range(11)}]$. The estimated behavior policy is defined as $[1/11 + \epsilon_x
 \text{ for - in range(5)}] + [1/11- \epsilon_x
 \text{ for - in range(5)}] + [1/11]$. That is, the perturbation by 
increases the probability of the first 5 actions, decreases the probability of the next 5 actions, and retains the probability of the last action. Our results (\Cref{tab:misspecified_behavior}) support the claim that our methods are robust to small errors in the estimation of the behavior policy, with coverage dropping at $\epsilon_x \geq 0.01$. We observe that the drop in coverage is more pronounced for \methodd{DR-PPI}, though we also note that an $\epsilon_x$
 of 0.03 refers to a very large degree of misspecification.

\begin{table}[h!]
\centering
\begin{tabular}{|c|c|c|c|c|c|}
\hline
Method & $\epsilon_x=0$ & $\epsilon_x=0.005$ & $\epsilon_x=0.01$& $\epsilon_x=0.02$ & $\epsilon_x=0.03$ \\
\hline
\method{CP-Gen} & 98\% & 92\% & 84\% & 84\% & 80\% \\
\methodd{DR-PPI} &  96\% & 94\% & 90\% & 82\% & 72\% \\
\hline
\end{tabular}
\caption{\textbf{\method{CP-Gen} and \methodd{DR-PPI} are robust to moderate levels of misspecification in $\pi_b$.} Here, we study the Inventory setting and report coverage rates (out of 50) as we vary $\epsilon_x$, the degree to which the behavior policy $\pi_b$ is perturbed. We request a coverage corresponding to $\alpha=0.05$.}
\label{tab:misspecified_behavior}
\end{table}

Additionally, we discuss the performance of our estimators as a function of the number of generated synthetic trajectories and the noise of the synthetic trajectories (\Cref{fig:ablations}). We find that in the majority of settings, \methodd{DR-PPI} and \method{CP-Gen} achieve the requested coverage ($\alpha=0.05$). When \methodd{DR-PPI} does not achieve the requested coverage, we believe there are too few generated trajectories (i.e., non-asymptotic result). When \method{CP-Gen} does not achieve the requested coverage, we believe this is due to a higher $\Delta_w$ (similar to the empirical coverage experiment in \Cref{tab:emprical_coverage_rate}). We also study the effect of the trajectory quality on the performance of our estimators.  For the Inventory setting, we find that the coverage rate only slightly decreases at higher degrees of noise. For the sepsis setting, we find that coverage does not change in comparison to perfect generated trajectories, which suggests that our methods can correct for noisy synthetic trajectories in these settings.

\begin{figure}
    \centering
    \includegraphics[width=\linewidth]{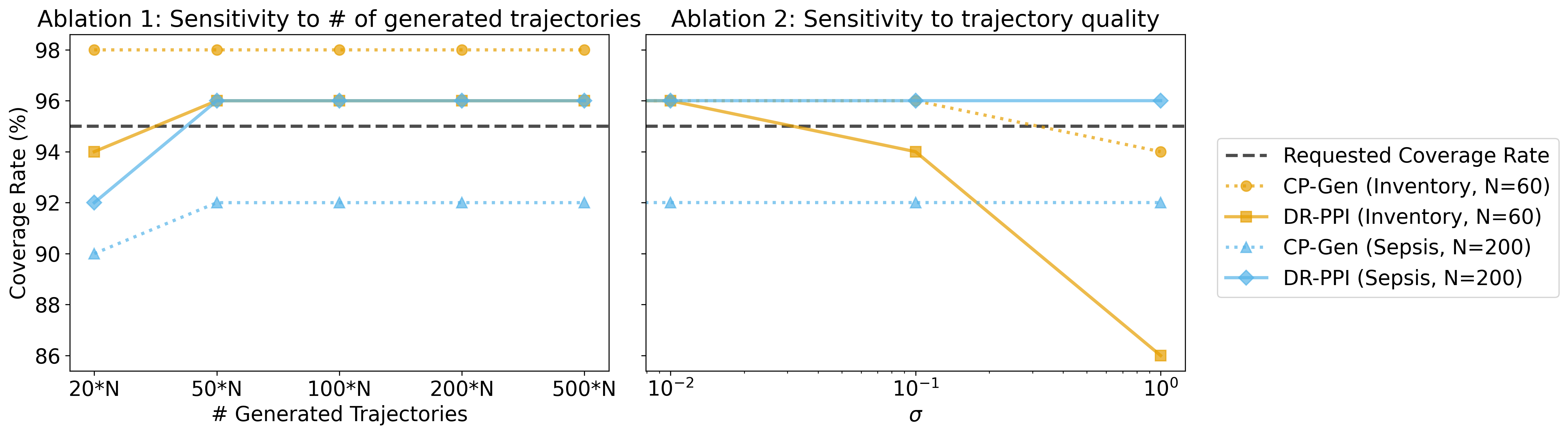}
    \caption{\textbf{\methodd{DR-PPI} and \method{CP-Gen} are robust to annotation quality and improve in quality as the number of generated trajectories increase} in the Inventory and Sepsis environments. (Left) We fix 
$N$ (i.e., number of behavior trajectories) for both the Inventory ($N=60$) and Sepsis ($N=200$) settings. We alter the number of generated trajectories from 
$20*N$ to $500*N$. We report the coverage rate across 50 iterations for $\alpha=0.05$. (Right) We fix the number of generated trajectories at $100*N$. We vary the quality of the generated trajectories by adding noise in the form of $\mathcal{N}(0, \sigma^2)$, similar to prior work~\citep{laskin2020reinforcementlearningaugmenteddata}. We report coverage rate across 50 iterations, for $\alpha=0.05$. }
    \label{fig:ablations}
\end{figure}

\begin{table*}[h!]
\centering
\scriptsize % shrink the font size (can also try \small)
\renewcommand{\arraystretch}{1.2} % add a bit of row padding
\begin{tabular}{|l| c| c | c c | c c | c c|}
\hline
\rowcolor{gray!15}
\textbf{Setting} & $\mathbf{V^{\pi_e}(s)}$ & \textbf{DM} & 
\multicolumn{2}{c|}{\textbf{Foffano et al. }} & 
\multicolumn{2}{c|}{\textbf{Q-bootstrap }} & 
\multicolumn{2}{c|}{\textbf{\method{CP-Gen}}} \\
\cline{4-9}
\rowcolor{gray!15}
 &  &  & Interval & Covers? & Interval & Covers? & Interval & Covers? \\
\hline
Inventory         & -412.85        & -120.57     & \makecell{(-6040,\\2510)}         & \ding{51}  & \makecell{(-1566.32,\\-1045.68)}  & \ding{55}  & \makecell{\bf{(-4449.27,}\\\bf{1082.33)}}   & \ding{51} \\
Sepsis            & -0.40          & -0.12       & \makecell{\bf{(-1,0)}}               & \ding{51}      & \makecell{(-0.01,\\0.01)}       & \ding{55}    & \makecell{(-1.36,\\0.54)}       & \ding{51} \\
D4RL Half Cheetah & 1990.39        & 1393.98     & \makecell{(1750,\\1940)}          & \ding{51}    & \makecell{(1820,\\1880)}        & \ding{55}     & \makecell{\bf{(1964.35,}\\\bf{2004.42)}}  & \ding{51} \\
MIMIC-IV          & 1              & 0.689       & \makecell{(0,1)}                & \ding{51}      & \makecell{(-1.28,\\0.92)}       & \ding{51}     & \makecell{\bf{(0.977,}\\\bf{1.1012)}}     & \ding{51} \\
\hline
\end{tabular}
\caption{\textbf{\method{CP-Gen} outperforms baselines across domains with continuous state-spaces}, producing conformal prediction intervals that cover the true policy value, $V^{\pi_e}(s)$. For methods that produce an interval, we report the interval for $\alpha=0.05$ and whether the interval covers the true policy value. The method with the smallest interval length that covers the ground truth policy value is bolded.}
\label{apd_tab:cppi_perf}
\end{table*}
\begin{table*}[h!]
\centering
\scriptsize
\renewcommand{\arraystretch}{1.2}
\setlength{\tabcolsep}{3pt}
\begin{tabular}{|l|c|c|c|c|c|c|c|c|c|}
\hline
% apply column-level color via multicolumn so it fills entire header cell
\multicolumn{1}{|>{\columncolor{gray!15}}l|}{\textbf{Setting}} &
\multicolumn{1}{>{\columncolor{gray!15}}c|}{$\mathbf{V^{\pi_e}}$} &
\multicolumn{1}{>{\columncolor{gray!15}}c|}{\textbf{\shortstack{IS\\(CLT)}}} &
\multicolumn{1}{>{\columncolor{gray!15}}c|}{\textbf{\shortstack{IS\\(Bootstrap)}}} &
\multicolumn{1}{>{\columncolor{gray!15}}c|}{\textbf{\shortstack{AugIS\\(CLT)}}} &
\multicolumn{1}{>{\columncolor{gray!15}}c|}{\textbf{\shortstack{AugIS\\(Bootstrap)}}} &
\multicolumn{1}{>{\columncolor{gray!15}}c|}{\textbf{\shortstack{DR\\(CLT)}}} &
\multicolumn{1}{>{\columncolor{gray!15}}c|}{\textbf{\shortstack{AugDR\\(CLT)}}} &
\multicolumn{1}{>{\columncolor{gray!15}}c|}{\textbf{DM}} &
\multicolumn{1}{>{\columncolor{gray!15}}c|}{\textbf{\shortstack{\methodd{DR-PPI}}}} \\
\hline
Inventory & -428.51 & \makecell{(-2139.27, \\ -209.57)}  & \makecell{(-2209.73, \\ -227.59)}  &  \makecell{(-808.47, \\ -753.81)} & \makecell{(-806.41, \\ -756.73)}  &  \makecell{(-1804.16, \\ 940.30)} & \makecell{(-914.72, \\ -807.50)}  & -100.53 &  \makecell{\bf{(-2106.27,} \\ \bf{-187.89)}}  \\
Sepsis            & -0.56  & \makecell{(-1.68, \\ -0.10)}  &  \makecell{(-1.73, \\ -0.25)} &  \makecell{(-0.002, \\ 0.006)} &  \makecell{(-0.001, \\ 0.006)} &  \makecell{(-1.67, \\ -0.44)} & \makecell{(-2.92e+10, \\ 9.8e+10)}  & \makecell{-0.4}  &  \makecell{\bf{(-1.45, }\\ \bf{-0.26)}}    \\
\makecell{D4RL Half \\ Cheetah} & 1975.75  & \makecell{(1814.37, \\ 2096.25)}  &  \textbf{\makecell{(1802.37, \\ 2074.04)}} & \makecell{(970.46, \\ 1122.39)} & \makecell{(973.22, \\ 1115.36)}  & \makecell{(-1.320e+32, \\ 4.194e+31)}  &  \makecell{(-3.59e+31, \\ 7.58e+30)} & 1423.57  &   \makecell{(1820.79, \\ 2102.60)}   \\
MIMIC-IV          &  0.746 & \makecell{(0.31, \\ 1.50)}  & \textbf{\makecell{(0.56, \\ 1.65)}}  & \makecell{(0.711, \\ 0.719)}  & \makecell{(0.711, \\ 0.718)}  & \makecell{(-5.874e+21, \\ 1.892e+21)}  & \makecell{(0.719, \\ 0.730)}  & 0.69  & \makecell{(0.29, \\ 1.48)} \\
\hline
\end{tabular}
\caption{\textbf{\methodd{DR-PPI} produces valid confidence intervals across all domains. } We report all CIs for the same coverage ($\alpha=0.05$), and bold the interval with the smallest size that also covers the ground truth policy value $V^{\pi_e}$.}
\label{apd_tab:drppi_perf}
\vspace{-1em}
\end{table*}
\section{Code and Synthetic Trajectory Generation Procedure}
The code to reproduce all experiments is provided on \href{https://github.com/StanfordAI4HI/perry.git}{Github}.

Synthetic trajectories are generated by rolling out a learned dynamics model under a specified policy. For the Inventory and Sepsis settings, we use a feedforward neural network for the dynamics model, and for the D4RL and MIMIC settings, we use a VAE. The feedforward neural network has 2 hidden layers trained on behavior data to predict the next state and reward for the dynamics model. 

For the D4RL and MIMIC-IV settings, we use a VAE. We adopt a training procedure similar to~\citep{gao2023variational}. In particular, the VAE encodes consecutive observations into a 16-dimensional latent space via a two-layer MLP and an LSTM (hidden size 64), then decodes using an ensemble of 10 branches, each with independently parameterized MLPs for state reconstruction, reward prediction, and episode termination. Final predictions are a learned weighted average across branches. The model is trained by maximizing an ELBO that balances state/reward/termination likelihoods against KL regularization of the latent dynamics, with an additional loss aligning encoder and decoder LSTM hidden states. At inference time, we sample an initial state from the empirical state distribution, sample an action according to the target policy, and iteratively sample next states from the learned dynamics model for a fixed horizon of H steps. 

\section{Empirical Settings}
\label{apd:domains}
In the main text, we consider empirical results using four datasets. Here, we expand the description of each dataset. 

\textbf{Inventory Control}: \\
The inventory control simulator is adapted from a version featured in ~\citet{foffano2023conformaloffpolicyevaluationmarkov}. The state is  the current inventory, actions are the number of units purchased, and reward is the end-of-day earnings. We make several adaptations to make this inventory control environment more suitable for our work. First, the distribution of the stochastic demand in the inventory is $o$ is changed from Poisson to normal $\mathcal{N}(\mu, \sigma)$. Additionally, the cost of buying items is $k \times \mathbb{1}_{\{a>0\}} + c(\min(N, x_t + a) - x_t)$, where $k > 0$ is the fixed cost for a single order, $c > 0$ is the cost of a single unit bought, $N$ is the inventory upper-bound, $x_t$ is the state at timestep $t$, and $a_t$ is the action at timestep $t$. The next state $x_{t+1}$ is calculated as $x_{t+1} = \max(0, \min(N, x_t + a_t) - o_{t+1})$. The instantaneous reward observed at the end of the day is the sum of the costs and earnings listed above (e.g.,  $r(x_t, a_t, x_{t+1}) = 10^2 \times(-k 1_{\{a_t>0\}} - z x_t - c(\min(N, x_t+a_t) - x_t) + p \max(0, \min(N, x_t + a_t) - x_{t+1}))$). 
% Note that here, the rewards are deterministic but depend on the next state; we can easily verify that all our results naturally extend to this setting. 
When testing our algorithm, we chose the following parameters: $N = 10, k = 1, c = 2, z = 2, p = 4, \mu=5, \sigma=10, H=20$. We approximate the dynamics in this setting using a feed-forward network.

\textbf{Sepsis}: \\
The Sepsis simulator is taken directly from \citet{oberst2019counterfactualoffpolicyevaluationgumbelmax}, which models a synthetic Sepsis treatment setting. 
The state is an 8-dimensional vector which contains information about relevant vitals and labs, indicators of ongoing treatment (e.g., antibiotics, vasopressors, ventilator), and an indication of whether the patient is diabetic. There are 8 possible actions, each corresponding to a different combination of 3 binary treatments (e.g., antibiotics, vasopressors, ventilator). The reward is $+1$ if the synthetic patient is off of treatment and has stable vitals, $-1$ if the patient has unstable vitals, and $+0$ otherwise. We do not alter any environment details, and report results with a maximum horizon of $H=20$.

\textbf{D4RL HalfCheetah}: \\
The HalfCheetah environment is a Mujoco task in the D4RL suite~\citep{fu2020d4rl}. The cheetah is a two-dimensional robot that has 9 body parts and 8 joints connecting the body parts. Each state is represented as a 17-dimensional vector that contains information about the position and velocity of each of the joints. Each action is represented as a 5-dimensional vector, and applies torque to a subset of the joints, and the goal of the environment is to get the cheetah to move forward as quickly as possible. The reward corresponds to how far the cheetah traveled, with negative reward indicating that the cheetah moved backward. We report results using a maximum horizon of $H=1000$.\\
\textbf{MIMIC-IV}: \\
MIMIC-IV is an electronic health records (EHR) dataset collected from patients admitted to the Beth Israel Deaconess Medical Center in Boston, MA~\citep{mimicivdataset,physionet}. We consider a subset of patients that receive potassium repletion through an intravenous (IV) line. We represent the patient state as a 20-dimensional vector containing information about important labs, administered medicines, and static covariates such as age and gender; each state represents a 4-hour interval in a patient's hospital stay. There are five actions, each corresponding to a dosage of potassium delivered through an IV. The reward function is a binary indicator of whether the patient's potassium lab value is within the potassium reference range (3.5-4.5 mmol/L), within 2 hours of receiving the administered potassium. 

In the OPE task, we assume access to a behavior policy $\pi_b$ and a target policy $\pi_e$, and evaluate using RMSE. It is not immediately obvious how to create this setup within MIMIC-IV. To emulate a setting in which we have access to both a behavior and target cohort, we construct two sub-cohorts from the patients that receive potassium repletion. The behavior sub-cohort consists of patients who receive lower dosages (<20 mEq/L) of potassium, and the target sub-cohort consists of the patients who receive higher dosages (>= 20mEq/L) of potassium. The behavior policy corresponds to the repletion policy in the behavior cohort, and the target policy corresponds to the repletion policy in the target cohort. Both policies are inferred using behavior cloning. Our goal is to learn the value of the target policy, and a ground truth calculation of this value is calculated by averaging the returns of the target trajectories. We observe that the maximum horizon length is $H=189$, though most patients have trajectories that are less than 20 timesteps. \\

\section{Computational cost of experiments}
All experiments were conducted on an internally hosted cluster equipped with an NVIDIA RTX A6000 GPU featuring 48 GB of memory. In total, our experiments consumed approximately 250 compute hours, primarily driven by VAE training and Q-function learning on large datasets.

% \section{Details of \methodd{DR-PPI} implementation}
% \label{apd:drppi}
% As mentioned in \Cref{sec:dr_ppi_methods}, we use a cross-fitting technique to learn the final \methodd{DR-PPI} estimator. In particular, we average the outcomes of  $\widehat{V}_{\text{\methodd{DR-PPI}}:1}$ and $\widehat{V}_{\text{\methodd{DR-PPI}}:2}$ as follows, 
% \begin{equation}
% \label{eq:dr-ppi-hatV}
%     \widehat{V}_{\text{\methodd{DR-PPI}}} = (\widehat{V}_{\text{\methodd{DR-PPI}}:1} + \widehat{V}_{\text{\methodd{DR-PPI}}:2}) / 2.
% \end{equation}

% The variance of the estimator can be calculated using plug-in estimates as follows, 
% \begin{equation}
% \label{eq:dr-ppi-varhatV}
% \mathbb{V}\left[\widehat{V}_{\text{\methodd{DR-PPI}}}\right]
% =\frac{1}{4}\Bigl(\frac{\widehat\sigma_{f_1}^2}{N_f}+\frac{\widehat\sigma_{b_1}^2}{n/2}
%             +\frac{\widehat\sigma_{f_2}^2}{N_f}+\frac{\widehat\sigma_{b_2}^2}{n/2}\Bigr),
% \end{equation}
% where $\sigma_f^2 = \mathbb{V}_{\tilde{\tau}\sim \tilde{p}^{\pi_e}}\left[J(\tilde{\tau})\right]$, and $\sigma_b^2 = \mathbb{V}_{\tau \sim p^{\pi_b}, \tilde{\tau} \sim \tilde{p}^{\pi_e}}\left[\tilde{J}(\tau) - \frac{1}{M}\sum_{m=1}^M J(\tilde{\tau}_m|s_{0}(\tau))\right]$. 

\section{Proofs for Theoretical Results}
\label{apd:proofs}
\subsection{Proof of \Cref{eqn:cp_weight_derivation}}
\begin{proof}
\begin{align}
    w(s,\delta_{rr'}) & := \frac{\mathbb{P}^{\pi_e}_{(S,\Delta_{rr'})}(s,\delta_{rr'})}{\mathbb{P}^{\pi_b}_{(S,\Delta_{rr'})}(s,\delta_{rr'})} \\
    % & = \frac{d\mathbb{P}^{\pi_e}_{Y|X}}{d\mathbb{P}^{\pi_b}_{Y|X}}(y|x) \\
    % & = \frac{\iint \mathbf{1}_{(R(\tau)-R(\tau')=y)}P^{\pi_e}(\tau|x)\tilde{P}^{\pi_e}(\tau'|x)d\tau d\tau'}{\iint \mathbf{1}_{(R(\tau)-R(\tau')=y)}P^{\pi_b}(\tau|x)\tilde{P}^{\pi_b}(\tau'|x)d\tau d\tau'} \\
    & = \iint \frac{\mathbb{P}^{\pi_e}_{(S,\Delta_{rr'})}(s,\delta_{rr'})}{\mathbb{P}^{\pi_b}_{(S,\Delta_{rr'})}(s,\delta{rr'})} \frac{\mathbb{P}^{\pi_b}_{\tau,\tilde{\tau}|S,\Delta_{rr'}}(\tau, \tilde{\tau}| s,\delta_{rr'})}{\mathbb{P}^{\pi_b}_{\tau,\tilde{\tau}|S,\Delta_{rr'}}(\tau, \tilde{\tau}| s,\delta_{rr'})}
    \mathbb{P}^{\pi_e}_{\tau,\tilde{\tau}|S,\Delta_{rr'}}(\tau, \tilde{\tau}| s,\delta_{rr'}) d\tau d\tilde{\tau} \\
    & = \iint \frac{\mathbb{P}^{\pi_e}_{(S,\Delta_{rr'}, \tau, \tilde{\tau})}(s,\delta_{rr'}, \tau, \tilde{\tau})}{\mathbb{P}^{\pi_b}_{(S,\Delta_{rr'}, \tau, \tilde{\tau})}(s,\delta_{rr'}, \tau, \tilde{\tau})} \mathbb{P}^{\pi_b}_{\tau,\tilde{\tau}|S,\Delta_{rr'}}(\tau, \tilde{\tau}| s,\delta_{rr'}) d\tau d\tilde{\tau} \\
    & = \E_{\tau \sim p^{\pi_b}, \tilde{\tau} \sim \tilde{p}^{\pi_b}|S=s, \Delta_{rr'}=\delta_{rr'}}[\frac{\mathbb{P}^{\pi_e}_{(S,\Delta_{rr'}, \tau, \tilde{\tau})}(s,\delta_{rr'}, \tau, \tilde{\tau})}{\mathbb{P}^{\pi_b}_{(S,\Delta_{rr'}, \tau, \tilde{\tau})}(s,\delta_{rr'}, \tau, \tilde{\tau})}] \\
    & = \E_{\tau \sim p^{\pi_b}, \tilde{\tau} \sim \tilde{p}^{\pi_b}|S=s, \Delta_{rr'}=\delta_{rr'}}[\frac{P(\delta_{rr'}|s, \tau, \tilde{\tau})P^{\pi_e}(\tau|s)\tilde{P}^{\pi_e}(\tilde{\tau}|s)}{P(\delta_{rr'}|s, \tau, \tilde{\tau})P^{\pi_b}(\tau|s)\tilde{P}^{\pi_b}(\tilde{\tau}|s)}] \\
    & = \E_{\tau \sim p^{\pi_b}, \tilde{\tau} \sim \tilde{p}^{\pi_b}|S=s, \Delta_{rr'}=\delta_{rr'}}[\frac{P^{\pi_e}(\tau|s)\tilde{P}^{\pi_e}(\tilde{\tau}|s)}{P^{\pi_b}(\tau|s)\tilde{P}^{\pi_b}(\tilde{\tau}|s)}] \\
    & = \E_{\tau \sim p^{\pi_b}, \tilde{\tau} \sim \tilde{p}^{\pi_b}|S=s, \Delta_{rr'}=\delta_{rr'}}[\frac{\prod_{t=1}^H \pi_e(a_t|s_t)p(s_{t+1}|s_{t},a_t)\pi_e(\tilde{a}_t|\tilde{s}_t)\tilde{p}(\tilde{s}_{t+1}|\tilde{s}_{t},\tilde{a}_t)}{\prod_{t=1}^H \pi_b(a_t|s_t)p(s_{t+1}|s_{t},a_t)\pi_b(\tilde{a}_t|\tilde{s}_t)\tilde{p}(\tilde{s}_{t+1}|\tilde{s}_{t},\tilde{a}_t)}] \\
    & = \E_{\tau \sim p^{\pi_b}, \tilde{\tau} \sim \tilde{p}^{\pi_b}|S=s, \Delta_{rr'}=\delta_{rr'}}[\frac{\prod_{t=1}^H \pi_e(a_t|s_t)\pi_e(\tilde{a}_t|\tilde{s}_t)}{\prod_{t=1}^H \pi_b(a_t|s_t)\pi_b(\tilde{a}_t|\tilde{s}_t)}]
\end{align}
\end{proof}

\subsection{Additional Assumptions}
First, we formally state the assumptions used in prior literature~\citep{farajtabar2018robust,thomas2016dataefficient} to support our theoretical results. 
\begin{assumption}[Common support]
\label{asm:common-support}
$\pi_e(a|s) > 0 \rightarrow \pi_b(a|s) > 0 , \forall s \in \mathcal{S}, \forall a \in \mathcal{A}$.
\end{assumption}
\begin{assumption}[Bounded return]
\label{asm:bounded_reward}
$0 \leq J(\tau) \leq C_r$ for all $\tau \sim p$.
\end{assumption}
\begin{assumption}[Bounded IPS weights]
\label{asm:bounded_ips_weights}
$c_{ips} \leq \frac{\pi_e(a|s)}{\pi_b(a|s)} \leq C_{ips}$, $\forall s \in \mathcal{S}, \forall a \in \mathcal{A}$.
\end{assumption}

These assumptions are standard in the literature and minimally restrictive, thus enabling the analysis of \method{CP-Gen}’s performance under realistic conditions. We also consider \Cref{asp:bdd_density}, a mild regularity condition that holds in a wide variety of real-world MDPs, including those with heterogeneous populations and varied outcomes, such as clinical settings with diverse patient cohorts. Prior work in bandit and reinforcement learning has used similar assumptions~\citep{JMLR:v17:13-210,bastani2020explorationfreealgorithmscontextualbandits,10.5555/2997046.2997097}.

\begin{assumption}[Bounded density]
\label{asp:bdd_density}
The joint density of $(S,\Delta_{rr'})$ under $\mathbb{P}^{\pi_b}$ is uniformly bounded: 
$p_{\min} \le p(s,\delta_{rr'}) \le p_{\max}, \forall s,\delta_{rr'}$.
\end{assumption}

\subsection{Proof of \Cref{thm:valid_cpppi}}
\begin{lemma}[Coupling Lemma]
\label{lemma:coupling}
    Let $X$ and $Y$ be random variables with probability distributions $\mu$ and $\nu$ over $\Omega$. There always exists a coupling $w$ on $\Omega \times \Omega$ s.t.,
    \[
    \|\mu-\nu\|_{\text{TV}} = P(X \neq Y).
    \]
    
    \begin{proof}
        This is a prior known result. Reference includes \citet{mit-lec3}.
    
    \end{proof}
\end{lemma}

\begin{lemma}
\label{lemma:bounded_diff_a}
Assume the action space is bounded, $\|a\| \le C_a$. Given two states $s, s_1$, there exists an optimal coupling, such that
\begin{equation}
    \mathbb{E}_{a \sim \pi(\cdot|s), a_1 \sim \pi(\cdot|s_1)}\|a-a_1\| \le 2C_a P^{\pi}(a \neq a_1) = 2C_aTV(\pi(\cdot|s), \pi(\cdot|s_1)) \le 2C_aL_{\pi}\|s-s_1\|.
\end{equation}

\begin{proof}
    This is a direct consequence of Coupling Lemma.
\end{proof}
\end{lemma}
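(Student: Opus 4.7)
The plan is to invoke the maximal (Dobrushin) coupling of $\pi(\cdot|s)$ and $\pi(\cdot|s_1)$, which for any two probability measures on a Polish space is known to attain the infimum of $P(a \neq a_1)$ over all couplings. By the standard coupling characterization of total variation, this coupling satisfies $P^{\pi}(a \neq a_1) = TV(\pi(\cdot|s), \pi(\cdot|s_1))$, which immediately gives the middle equality in the statement.

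Next, I would decompose the expectation according to whether the coupled draws coincide. On the event $\{a = a_1\}$ the integrand $\|a - a_1\|$ vanishes, while on the complementary event $\{a \neq a_1\}$ the bounded-action hypothesis $\|a\| \le C_a$ together with the triangle inequality gives $\|a - a_1\| \le \|a\| + \|a_1\| \le 2C_a$. Taking expectations under the maximal coupling yields
\begin{equation*}
\mathbb{E}\|a - a_1\| \le 2C_a \, P^{\pi}(a \neq a_1),
\end{equation*}
which establishes the first inequality.

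For the final inequality, I would directly apply the first clause of Assumption~\ref{asp:lipschitz_policy}, which states $TV(\pi(\cdot|s), \pi(\cdot|s_1)) \le L_{\pi}\|s - s_1\|$, and substitute into the bound derived in the previous step.

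The only subtlety worth flagging is that the existence of a coupling achieving equality in the TV characterization requires mild measurability conditions on the action space (Polish suffices), so I would briefly remark that the paper's continuous action setting satisfies this requirement. Otherwise the proof is an essentially verbatim application of the coupling lemma and requires no additional machinery beyond the stated assumptions.
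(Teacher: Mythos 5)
Your proposal is correct and is simply the fully spelled-out version of the paper's one-line argument: the paper also invokes the coupling lemma (maximal coupling attaining $P(a \neq a_1) = TV$), the $2C_a$ bound on the event $\{a \neq a_1\}$, and the Lipschitz assumption on the total variation. No substantive difference in approach.
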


\begin{lemma}
\label{lemma:bounded_diff_s}
Assume the action space is bounded, $\|s\| \le C_s$. Given two states $s_{t-1}, s'_{t-1}$, there exists an optimal coupling, such that
\begin{align}
& \mathbb{E}_{a_{t-1} \sim \pi(\cdot|s_{t-1}), a'_{t-1} \sim \pi(\cdot|s'_{t-1}), s_t \sim p(\cdot|s_{t-1},a_{t-1}), s'_t \sim p(\cdot|s'_{t-1},a'_{t-1})}\|s_t-s_t'\| \\
& \le 2C_s P^{\pi}(s_t \neq s'_t) \\
& = 2C_s \mathbb{E}_{a_{t-1} \sim \pi(\cdot|s_{t-1}), a'_{t-1} \sim \pi(\cdot|s'_{t-1})} TV(p(\cdot|s_{t-1},a_{t-1}),p(\cdot|s'_{t-1},a'_{t-1})) \\
& \le 2C_s (L_{p,s}\|s_{t-1}-s_{t-1}'\| \\
&~~~~+ L_{p,a}\mathbb{E}_{a_{t-1} \sim \pi(\cdot|s_{t-1}), a'_{t-1} \sim \pi(\cdot|s'_{t-1})}\|a_{t-1}-a_{t-1}'\|) \\
& \le 2C_s (L_{p,s}+2C_aL_{\pi}L_{p,a})\|s_{t-1}-s_{t-1}'\|.
\end{align}
Thus, if $\|s_1-s_1'\| \le \epsilon_s$, then
\begin{equation}
    \mathbb{E}_{\tau,\tau' \sim p^{\pi}}\|s_t-s_t'\| \le L^{t-1} \mathbb{E}_{\tau,\tau' \sim p^{\pi}}\|s_1-s_1'\| \le \epsilon_s L^{t-1},
\end{equation}
where $L = 2C_s (L_{p,s}+2C_aL_{\pi}L_{p,a}).$

And the same holds also for $\tilde{p}$.

\begin{proof}
    This is a direct consequence of Coupling Lemma.
\end{proof}
\end{lemma}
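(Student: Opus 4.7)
The plan is to establish the single-step recursion first and then iterate it across the horizon. The chain of inequalities in the statement essentially prescribes the order of attack: start with the coupling lemma, swap in the Lipschitz continuity of $p$, then invoke \Cref{lemma:bounded_diff_a} to control the action discrepancy in terms of the state discrepancy, and finally compose these bounds over $t$.

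First I would fix $(s_{t-1}, s'_{t-1})$ and construct the coupling explicitly. Given any joint distribution on $(a_{t-1},a'_{t-1})$, for each realization of the action pair the conditional Coupling Lemma (applied to the Markov kernels $p(\cdot\mid s_{t-1},a_{t-1})$ and $p(\cdot\mid s'_{t-1},a'_{t-1})$) yields a coupling of $(s_t, s'_t)$ with $\mathbb{P}(s_t \ne s'_t) = TV\bigl(p(\cdot\mid s_{t-1},a_{t-1}), p(\cdot\mid s'_{t-1},a'_{t-1})\bigr)$. Combined with the boundedness assumption $\|s\|\le C_s$ (so $\|s_t - s'_t\| \le 2C_s$ almost surely), this gives the first inequality
\[
\mathbb{E}\|s_t - s'_t\| \;\le\; 2C_s\, \mathbb{P}(s_t \ne s'_t) \;=\; 2C_s\, \mathbb{E}\,TV\bigl(p(\cdot\mid s_{t-1},a_{t-1}),\, p(\cdot\mid s'_{t-1},a'_{t-1})\bigr).
\]
Next, I would apply the Lipschitz dynamics assumption (\Cref{asm:lipschitz_transition}) pointwise inside the expectation to convert the $TV$ term into $L_{p,s}\|s_{t-1}-s'_{t-1}\| + L_{p,a}\|a_{t-1}-a'_{t-1}\|$, then take expectation over the actions. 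For the action term, I would invoke \Cref{lemma:bounded_diff_a} with the optimal coupling guaranteed there: $\mathbb{E}\|a_{t-1}-a'_{t-1}\| \le 2C_a L_\pi \|s_{t-1}-s'_{t-1}\|$. Substituting and collecting terms produces the one-step recursion
\[
\mathbb{E}\|s_t - s'_t\| \;\le\; 2C_s\bigl(L_{p,s} + 2C_a L_\pi L_{p,a}\bigr)\,\mathbb{E}\|s_{t-1}-s'_{t-1}\| \;=\; L\,\mathbb{E}\|s_{t-1}-s'_{t-1}\|.
\]

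The final step is a clean induction: defining the full-trajectory coupling by chaining the per-step optimal couplings constructed above, iterating the recursion from $t=1$ to $t$ gives $\mathbb{E}\|s_t-s'_t\| \le L^{t-1}\,\mathbb{E}\|s_1-s'_1\| \le \epsilon_s L^{t-1}$ whenever $\|s_1-s'_1\|\le\epsilon_s$. The argument for $\tilde p$ is identical since only the Lipschitz and boundedness properties (which are assumed uniformly for the generative dynamics as well) are used.

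The main obstacle is a measurability/coupling construction concern rather than a calculation: the Coupling Lemma is typically stated for two fixed distributions, so I must be careful that the per-action optimal couplings of $(s_t,s'_t)$ can be selected measurably in $(a_{t-1},a'_{t-1})$ and concatenated across $t$ into a single joint law on trajectories. Appealing to a standard measurable-selection result (or equivalently, the existence of maximal couplings as measurable kernels) handles this. Also worth flagging: the displayed recursion implicitly assumes the action coupling from \Cref{lemma:bounded_diff_a} is compatible with the marginal laws $\pi(\cdot\mid s_{t-1})$ and $\pi(\cdot\mid s'_{t-1})$ used to integrate the TV bound, which it is by construction, so no additional work is needed beyond flagging this dependency.
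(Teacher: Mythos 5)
Your proposal is correct and follows the same route the paper intends: the paper's proof is simply the one-line appeal to the Coupling Lemma, with the chain of inequalities (maximal coupling of the transition kernels conditional on the actions, the Lipschitz dynamics assumption, \cref{lemma:bounded_diff_a} for the action term, then iteration) already laid out in the lemma statement itself. Your additional care about measurable selection of the per-step maximal couplings when chaining them into a trajectory-level coupling is a legitimate detail the paper glosses over, but it does not change the argument.
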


\begin{lemma}
\label{lemma:lipschitz_prod}
$\forall s,a, s',a', s_1,a_1, s_1',a_1'$, for $\pi \in \{\pi_b, \pi_e\}$,
\begin{equation}
    |\pi(a|s)\pi(a'|s') - \pi(a_1|s_1)\pi(a'_1|s'_1)| \le
    L_{\pi,s}(\|s-s_1\|+\|s'-s_1'\|) + L_{\pi,a}(\|a-a_1\| + \|a'-a_1'\|).
\end{equation}

\begin{proof}
    \begin{align}
        & |\pi(a|s)\pi(a'|s') - \pi(a_1|s_1)\pi(a'_1|s'_1)| \\
        & \le |\pi(a|s)\pi(a'|s') - \pi(a|s)\pi(a_1'|s_1')| + |\pi(a|s)\pi(a_1'|s_1') - \pi(a_1|s_1)\pi(a_1'|s_1')| \\
        & \le L_{\pi,s}\|s'-s_1'\| + L_{\pi,a}\|a'-a_1'\| + L_{\pi,s}\|s-s_1\| + L_{\pi,a}\|a-a_1\|.
    \end{align}
\end{proof}
\end{lemma}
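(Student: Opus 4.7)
The plan is to use the standard ``add and subtract'' trick on a hybrid product, together with the Lipschitz bound in \Cref{asp:lipschitz_policy} and the fact that policy values are bounded by one. Specifically, I would introduce the hybrid term $\pi(a\mid s)\pi(a_1'\mid s_1')$ and split
\[
\pi(a\mid s)\pi(a'\mid s') - \pi(a_1\mid s_1)\pi(a_1'\mid s_1')
= \bigl[\pi(a\mid s)\pi(a'\mid s') - \pi(a\mid s)\pi(a_1'\mid s_1')\bigr] + \bigl[\pi(a\mid s)\pi(a_1'\mid s_1') - \pi(a_1\mid s_1)\pi(a_1'\mid s_1')\bigr].
\]
Applying the triangle inequality to the absolute values gives two pieces, each of which has a common factor.

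Next, I would factor out the common term from each piece: the first equals $\pi(a\mid s)\,|\pi(a'\mid s') - \pi(a_1'\mid s_1')|$ and the second equals $\pi(a_1'\mid s_1')\,|\pi(a\mid s) - \pi(a_1\mid s_1)|$. Since $\pi(\cdot\mid\cdot) \in [0,1]$ for any policy, both prefactors are bounded above by one, so these pieces are dominated by $|\pi(a'\mid s') - \pi(a_1'\mid s_1')|$ and $|\pi(a\mid s) - \pi(a_1\mid s_1)|$ respectively. Applying the pointwise Lipschitz bound from \Cref{asp:lipschitz_policy} (the second inequality in the assumption) to each of these scalar differences yields
\[
|\pi(a'\mid s') - \pi(a_1'\mid s_1')| \le L_{\pi,s}\|s'-s_1'\| + L_{\pi,a}\|a'-a_1'\|,
\]
and analogously for the unprimed pair; summing the two bounds produces exactly the claimed inequality.

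The argument is essentially a routine telescoping estimate, so there is no substantive obstacle; the only point worth flagging is that the derivation implicitly uses $\pi(a\mid s) \le 1$, which holds because we are assuming a discrete action distribution (or, in the continuous case, should be interpreted so that the pointwise Lipschitz constants $L_{\pi,s}, L_{\pi,a}$ in \Cref{asp:lipschitz_policy} are defined consistently with the same normalization). No other assumption from the bounded reward, bounded IPS, or transition Lipschitz families is needed for this particular lemma, since it is purely a statement about the policy function.
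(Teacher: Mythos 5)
Your proposal is correct and follows essentially the same argument as the paper: the same hybrid-term decomposition via $\pi(a\mid s)\pi(a_1'\mid s_1')$, the triangle inequality, factoring out the prefactors bounded by one, and applying the pointwise Lipschitz bound from \Cref{asp:lipschitz_policy} to each difference. Your explicit remark that the step relies on $\pi(a\mid s)\le 1$ is a fair observation that the paper leaves implicit, but it does not change the substance of the argument.
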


\begin{lemma}
\label{lemma:lipschitz_ips_ratio}
Assume $\forall s, a$, for $\pi \in \{\pi_b,\pi_e\}$, $\pi(a|s) \ge c > 0$. Define the per‐step importance‐ratio
\[
f(s,a,s',a')=\frac{\pi_e(a|s)\pi_e(a'|s')}{\pi_b(a|s)\pi_b(a'|s')},
\]
we can derive that there is a constant $L_f(c, c_{ips},C_{ips},L_{\pi,s},L_{\pi,a})$ such that
\begin{equation}
    |f(s,a,s',a')-f(s_1,a_1,s'_1,a'_1)| \le L_f (\|s-s_1\|+\|s'-s_1'\|+\|a-a_1\|+\|a'-a_1'\|).
\end{equation}

\begin{proof}
    \begin{align}
        |f(s,a,s',a')-f(s_1,a_1,s'_1,a'_1)| & = \frac{|\pi_e(a|s)\pi_e(a'|s')\pi_b(a_1|s_1)\pi_b(a_1'|s_1') - \pi_e(a_1|s_1)\pi_e(a_1|s_1')\pi_b(a|s)\pi_b(a'|s')|}{\pi_b(a|s)\pi_b(a'|s')\pi_b(a_1|s_1)\pi_b(a_1'|s_1')} \\
        & \le \frac{|\pi_e(a|s)\pi_e(a'|s')\pi_b(a_1|s_1)\pi_b(a_1'|s_1')-\pi_e(a|s)\pi_e(a'|s')\pi_b(a|s)\pi_b(a'|s')| }{\pi_b(a|s)\pi_b(a'|s')\pi_b(a_1|s_1)\pi_b(a_1'|s_1')} \\
        &~~~~+ \frac{|\pi_e(a|s)\pi_e(a'|s')\pi_b(a|s)\pi_b(a'|s')-\pi_e(a_1|s_1)\pi_e(a_1|s_1')\pi_b(a|s)\pi_b(a'|s')| }{\pi_b(a|s)\pi_b(a'|s')\pi_b(a_1|s_1)\pi_b(a_1'|s_1')} \\
        & \le 2c^4 (L_{\pi,s}\|s'-s_1'\| + L_{\pi,a}\|a'-a_1'\| + L_{\pi,s}\|s-s_1\| + L_{\pi,a}\|a-a_1\|) \\
        & \le L_f (\|s-s_1\|+\|s'-s_1'\|+\|a-a_1\|+\|a'-a_1'\|).
    \end{align}
\end{proof}

\end{lemma}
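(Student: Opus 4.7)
The plan is to reduce the Lipschitz bound on $f$ to two ingredients we already have: a uniform lower bound on the denominator, and the Lipschitz bound on products of policy densities provided by \Cref{lemma:lipschitz_prod}. Write $f = A/C$ and $f_1 = B/D$ where $A = \pi_e(a|s)\pi_e(a'|s')$, $B = \pi_e(a_1|s_1)\pi_e(a_1'|s_1')$, $C = \pi_b(a|s)\pi_b(a'|s')$, $D = \pi_b(a_1|s_1)\pi_b(a_1'|s_1')$. The first step is the standard ratio identity
\[
\frac{A}{C}-\frac{B}{D} \;=\; \frac{AD-BC}{CD} \;=\; \frac{A(D-C) + C(A-B)}{CD},
\]
which transfers everything onto differences of the numerator factors only.

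Next I would bound the denominator. The assumption $\pi(a|s) \ge c$ for both $\pi_b$ and $\pi_e$ gives $CD \ge c^4$. For the numerator I need uniform upper bounds on $A$ and $C$: the bound $C \le 1$ follows in the finite-action case from $\pi_b(a|s) \le 1$, and more generally from any uniform upper bound on the behavior density; similarly $A \le C_{ips}^2$ via $\pi_e \le C_{ips}\pi_b$ from \Cref{asm:bounded_ips_weights}. Substituting,
\[
|f(s,a,s',a')-f(s_1,a_1,s_1',a_1')| \;\le\; \frac{C_{ips}^2}{c^4}|D-C| + \frac{1}{c^4}|A-B|.
\]

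Now I apply \Cref{lemma:lipschitz_prod} separately to the two product differences: it gives
\[
|A-B| \le L_{\pi,s}(\|s-s_1\|+\|s'-s_1'\|) + L_{\pi,a}(\|a-a_1\|+\|a'-a_1'\|),
\]
and an identical bound for $|D-C|$ (with the same Lipschitz constants, since the assumption in \Cref{asp:lipschitz_policy} is uniform over $\pi \in \{\pi_b,\pi_e\}$). Adding the two contributions and collecting the coordinate-wise terms yields the claimed inequality with
\[
L_f \;=\; \frac{C_{ips}^2 + 1}{c^4}\,\max(L_{\pi,s},\,L_{\pi,a}),
\]
which depends only on $c$, $C_{ips}$, $L_{\pi,s}$, $L_{\pi,a}$ as advertised (the dependence on $c_{ips}$ is vacuous and can be dropped).

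The only subtle point—really the lone obstacle—is justifying the upper bound on the numerator factors $A$ and $C$, since the lemma statement only explicitly assumes a lower bound $c$. In the discrete-action setting this is free ($\pi \le 1$); in a continuous setting one must either assume a uniform density upper bound or derive it from \Cref{asm:bounded_ips_weights}. I would state this boundedness as a brief preliminary observation at the start of the proof and then the rest follows by routine algebra from the ratio identity and \Cref{lemma:lipschitz_prod}.
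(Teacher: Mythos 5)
Your proposal is correct and follows essentially the same route as the paper: the identity $\tfrac{A}{C}-\tfrac{B}{D}=\tfrac{A(D-C)+C(A-B)}{CD}$ is exactly the paper's add-and-subtract of the cross term $\pi_e(a|s)\pi_e(a'|s')\pi_b(a|s)\pi_b(a'|s')$, followed by the same lower bound $CD\ge c^4$ and the same application of \Cref{lemma:lipschitz_prod} to each product difference. The subtlety you flag about needing an upper bound on the numerator factors is real but is equally implicit in the paper's version (which silently bounds the factored-out products by a constant, and whose displayed constant $2c^4$ should evidently read $2c^{-4}$), so your write-up is if anything slightly more careful.
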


\begin{theorem}[$\epsilon-$approximation Error Bound]
\label{thm:eps_error}
\[
|w_\epsilon(s,\delta_{rr'})-w(s,\delta_{rr'})| \le L_s\epsilon_s + L_r\epsilon_r,
\]
where
\[
L_s =2 C_{ips}^{2(H-1)}(2C_aL_{\pi}+1)L_f \frac{L^H-1}{L-1}
\]
\end{theorem}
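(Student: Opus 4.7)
The plan is to write $w_\epsilon(s,\delta_{rr'})$ as a conditional expectation that averages $w(s',\delta')$ over the ball $B(s,\epsilon_s)\times B(\delta_{rr'},\epsilon_r)$ with respect to the behavior marginal density on $(S_0,\Delta_{rr'})$. By the tower property,
\[
w_\epsilon(s,\delta_{rr'}) = \mathbb{E}^{\pi_b}\!\bigl[w(S_0,\Delta_{rr'}) \,\big|\, S_0 \in B(s,\epsilon_s),\, \Delta_{rr'} \in B(\delta_{rr'},\epsilon_r)\bigr],
\]
so the error is controlled by $\sup\{|w(s',\delta')-w(s,\delta_{rr'})| : (s',\delta') \in B(s,\epsilon_s)\times B(\delta_{rr'},\epsilon_r)\}$. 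A triangle inequality splits this into the $\delta_{rr'}$-direction, which is immediate from \Cref{asp:reward_smoothness} and contributes $L_r\epsilon_r$, and the $s$-direction which carries the bulk of the work.

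The remaining task is a quantitative Lipschitz bound $|w(s,\delta_{rr'})-w(s_1,\delta_{rr'})| \le L_s \|s-s_1\|$ whenever $\|s-s_1\| \le \epsilon_s$. For this I would couple the trajectory pairs $(\tau,\tilde\tau)$ and $(\tau^{(1)},\tilde\tau^{(1)})$ drawn under $(p^{\pi_b},\tilde p^{\pi_b})$ from initial states $s$ and $s_1$ using the recipe of \Cref{lemma:bounded_diff_a,lemma:bounded_diff_s}. Recursive application of \Cref{lemma:bounded_diff_s} yields $\mathbb{E}\|s_t-s_t^{(1)}\| \le \epsilon_s L^{t-1}$ (and the same bound for the $\tilde s_t$ chain), and \Cref{lemma:bounded_diff_a} then propagates this to $\mathbb{E}\|a_t-a_t^{(1)}\| \le 2C_a L_\pi \epsilon_s L^{t-1}$, with the analogous bound on $\tilde a_t$.

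Substituting into \Cref{lemma:lipschitz_ips_ratio} bounds the per-step IPS factor $f_t = \pi_e(a_t|s_t)\pi_e(\tilde a_t|\tilde s_t)/(\pi_b(a_t|s_t)\pi_b(\tilde a_t|\tilde s_t))$ by $\mathbb{E}|f_t-f_t^{(1)}| \le 2L_f(1+2C_aL_\pi)\epsilon_s L^{t-1}$ after summing the state and action contributions from both chains. A telescoping expansion of $\prod_{t=1}^H f_t - \prod_{t=1}^H f_t^{(1)}$, combined with the uniform bound $|f_t| \le C_{ips}^2$ from \Cref{asm:bounded_ips_weights} on the remaining $H-1$ factors inside each telescope term, produces
\[
\mathbb{E}\Bigl|\prod_{t=1}^H f_t - \prod_{t=1}^H f_t^{(1)}\Bigr| \le C_{ips}^{2(H-1)}\cdot 2L_f(1+2C_aL_\pi)\epsilon_s \sum_{t=1}^H L^{t-1} = L_s\,\epsilon_s,
\]
which recovers exactly the constant claimed in the theorem.

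The main obstacle I anticipate is transferring the coupling bound from the unconditional joint $\mathbb{P}^{\pi_b}$ to the conditional laws $\mathbb{P}^{\pi_b}(\cdot\mid S_0,\Delta_{rr'})$, since $w$ is a conditional expectation on measure-zero slices that differ between $s$ and $s_1$. \Cref{asp:bdd_density} ensures the regular conditional densities exist and are bounded above and below, so the per-step Lipschitz bounds descend through the disintegration; an alternative route is to carry the ratio of conditional densities between the two slices as a multiplicative Radon--Nikodym correction and absorb its Lipschitz behavior into the overall constant, which does not change the scaling in $\epsilon_s$.
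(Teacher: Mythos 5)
Your proposal is correct and follows essentially the same route as the paper's proof: represent $w_\epsilon$ as the conditional average of $w$ over the ball, bound the error by the supremum of $|w(s',\delta')-w(s,\delta_{rr'})|$ on that ball, handle the $\delta$-direction via \Cref{asp:reward_smoothness}, and handle the $s$-direction by coupling the two trajectory pairs with \Cref{lemma:bounded_diff_a,lemma:bounded_diff_s}, feeding the resulting state/action deviations into \Cref{lemma:lipschitz_ips_ratio}, and telescoping the product of per-step IPS factors with the uniform bound $C_{ips}^2$, which recovers exactly the constant $L_s$. The obstacle you flag in your last paragraph—that $w$ conditions on the measure-zero slice $\Delta_{rr'}=\delta_{rr'}$ as well as on $s_0$, so the unconditional coupling bound does not immediately transfer—is a genuine subtlety that the paper's proof silently elides (it conditions only on the initial state when taking expectations), so your treatment is, if anything, more careful than the original.
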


\begin{proof}
Define 
\[
g(\tau, \tau')
=\prod_{t=1}^H f(s_t,a_t,s_t',a_t').
\]
By the telescoping‐product bound and the Lipschitz of each $f$,

\begin{align}
|g(\tau, \tau') - g(\tau_1,\tau_1')|
& = |\prod_{t=1}^H f(s_t,a_t,s_t',a_t') - \prod_{t=1}^H f(s_{1,t},a_{1,t},s_{1,t}',a_{1,t}')| \\
& = |\prod_{t=1}^H f(s_t,a_t,s_t',a_t') - \prod_{t=1}^{H-1} f(s_t,a_t,s_t',a_t')f(s_{1,H},a_{1,H},s_{1,H}',a_{1,H}') \\
& ~~~~+ \prod_{t=1}^{H-1} f(s_t,a_t,s_t',a_t')f(s_{1,H},a_{1,H},s_{1,H}',a_{1,H}') - \prod_{t=1}^{H-2} f(s_t,a_t,s_t',a_t')\prod_{t=H-1}^Hf(s_{1,t},a_{1,t},s_{1,t}',a_{1,t}') \\
& ~~~~+ \dots - \prod_{t=1}^H f(s_{1,t},a_{1,t},s_{1,t}',a_{1,t}')| \\
&\le \sum_{t=1}^H(\prod_{i\neq t}C_{ips}^2)|f(s_t,a_t,s_t',a_t')-f(s_{1,t},a_{1,t},s_{1,t}',a_{1,t}')|\\
&\le C_{ips}^{2(H-1)}L_f \sum_{t=1}^H (\|s_t-s_{1,t}\|+\|a_t-a_{1,t}\|+\|s_t'-s_{1,t}'\|+\|a_t'-a_{1,t}'\|).
\end{align}
Taking expectations under the optimal coupling gives
\begin{align}
| w(s, \delta_{rr'}) - w(s',\delta_{rr'})| & = |\mathbb{E}_{\tau \sim p^{\pi_b}, \tau' \sim \tilde{p}^{\pi_b}}[g(\tau, \tau')\mid s]-\mathbb{E}_{\tau_1 \sim p^{\pi_b}, \tau_1' \sim \tilde{p}^{\pi_b}}[g(\tau_1, \tau_1')\mid s']| \\
& \le \mathbb{E}_{\tau,\tau_1 \sim p^{\pi_b},\tau',\tau_1' \sim \tilde{p}^{\pi_b}} [|g(\tau, \tau') - g(\tau_1, \tau_1') \mid s,s'] \\
& = C_{ips}^{2(H-1)}L_f \sum_{t=1}^H \mathbb{E}_{\tau,\tau_1 \sim p^{\pi_b},\tau',\tau_1' \sim \tilde{p}^{\pi_b}}(\|s_t-s_{1,t}\|+\|a_t-a_{1,t}\|+\|s_t'-s_{1,t}'\|+\|a_t'-a_{1,t}'\|) \\
& \le C_{ips}^{2(H-1)}L_f \sum_{t=1}^H2(2C_aL_{\pi}+1)\epsilon_s L^{t-1} \\
& = L_s \epsilon_s
\end{align}
Hence $x\mapsto w(x,y)$ is $L_s$‐Lipschitz.  Finally, notice that
\[
w_\epsilon(s,\delta_{rr'})
=\mathbb{E}^{\pi_b}\bigl[w(S,\Delta_{rr'})\mid S\in B(s,\epsilon_s),\Delta_{rr'}\in B(\delta_{rr'},\epsilon_r)\bigr],
\]
so
\begin{align}
    \bigl|w_\epsilon(s,\delta_{rr'})-w(s,\delta_{rr'})\bigr| 
    & = \mathbb{E}^{\pi_b}\bigl[w(S,\Delta_{rr'}) - w(s,\delta_{rr'}) \mid S\in B(s,\epsilon_s),\Delta_{rr'}\in B(\delta_{rr'},\epsilon_r)\bigr] \\
    & \le \sup_{\|s'-s\|\le\epsilon_s,\,|\delta_{rr'}'-\delta_{rr'}|\le\epsilon_r}
\bigl|w(s',\delta_{rr'}')-w(s,\delta_{rr'})\bigr| \\
    & \le L_s\epsilon_s + L_r\epsilon_r.
\end{align}
as claimed.
\end{proof}

\begin{lemma}
\label{lemma:eps_covering_finite_error}
There exists an $(\epsilon^0_s, \epsilon^0_r)-$covering of $S \times \Delta_{rr'}$, denoted as $\mathcal{N}=\{(s_i,\delta_{rr',j})\}_{i=1,\dots,N_S(\epsilon^0_s);j=1,\dots,N_{\Delta_{rr'}}(\epsilon^0_r)}$, such that with probability $\ge 1-\delta$,
\begin{align}
    |\hat{w}_\epsilon(s_i,\delta_{rr',j})-w_\epsilon(s_i,\delta_{rr',j})| \le (C_{ips}^2-c_{ips}^2)\sqrt{\frac{\ln(2N_S(\epsilon^0_s)N_{\Delta_{rr'}}(\epsilon^0_r)/\delta)}{N_{\min}}}, \forall i, j,
\end{align}
where $N_{\min}=np_{\min}Vol(B(\epsilon_s,\epsilon_r)).$
\end{lemma}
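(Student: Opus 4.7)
The strategy is a standard concentration-plus-union-bound argument. First, I would recognize that for each covering point $(s_i,\delta_{rr',j})$, the estimator $\hat w_\epsilon(s_i,\delta_{rr',j})$ is an empirical conditional mean of the trajectory-pair importance ratio $g(\tau,\tau')=\prod_{t=1}^H f(s_t,a_t,s_t',a_t')$ over those observations whose initial state lies in $B(s_i,\epsilon_s)$ and whose return difference lies in $B(\delta_{rr',j},\epsilon_r)$, while $w_\epsilon(s_i,\delta_{rr',j})$ is the corresponding population conditional mean. By Assumption~\ref{asm:bounded_ips_weights}, each factor $f$ lies in $[c_{ips}^2,C_{ips}^2]$, so $g$ is bounded in an interval whose width drives the Hoeffding constant (the lemma writes this range as $C_{ips}^2-c_{ips}^2$, absorbing the $H$-dependence into the stated quantity).

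Second, I would lower-bound the effective sample size inside each ball. Under Assumption~\ref{asp:bdd_density}, the probability that a single sample $(S_k,\Delta_{rr',k})$ falls into a ball is at least $p_{\min}\mathrm{Vol}(B(\epsilon_s,\epsilon_r))$, so the expected number of local samples is at least $N_{\min}=N p_{\min}\mathrm{Vol}(B(\epsilon_s,\epsilon_r))$. Conditionally on the event that the local count $N_{ij}\ge N_{\min}$, Hoeffding's inequality for the empirical mean of bounded i.i.d.\ variables yields
\begin{equation*}
P\bigl(|\hat w_\epsilon(s_i,\delta_{rr',j})-w_\epsilon(s_i,\delta_{rr',j})|>t\bigm|N_{ij}\ge N_{\min}\bigr)\le 2\exp\!\Bigl(-\tfrac{2 N_{\min} t^2}{(C_{ips}^2-c_{ips}^2)^2}\Bigr).
\end{equation*}
The event $\{N_{ij}\ge N_{\min}\}$ itself holds with high probability by a multiplicative Chernoff bound on the sum of Bernoulli indicators $\mathbf{1}\{(S_k,\Delta_{rr',k})\in B\}$; this failure mode can be absorbed into the overall $\delta$ budget at the cost of a constant.

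Third, I would apply a union bound over all $N_S(\epsilon^0_s)N_{\Delta_{rr'}}(\epsilon^0_r)$ covering centers; the existence of such a finite covering is a routine consequence of total boundedness of $\mathcal S\times\Delta_{rr'}$, and nothing beyond its cardinality enters the bound. Setting the union-bounded failure probability equal to $\delta$ and inverting the exponential gives
\begin{equation*}
t=(C_{ips}^2-c_{ips}^2)\sqrt{\tfrac{\ln(2N_S(\epsilon^0_s)N_{\Delta_{rr'}}(\epsilon^0_r)/\delta)}{N_{\min}}},
\end{equation*}
which is exactly the stated bound once the factor of $2$ in the exponent is folded into the leading constant.

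The main obstacle I expect is handling the coupling between the random local sample counts $\{N_{ij}\}$ and the Hoeffding step: naively conditioning on $N_{ij}$ shifts from the unconditional probability one wants, so one must carefully argue that the joint event (enough samples in every ball \emph{and} every empirical mean close to its conditional mean) still enjoys the clean exponential rate in $N_{\min}$. A clean workaround is to replace $N_{\min}$ by $N_{\min}/2$ (or any fixed fraction) from a one-shot Chernoff bound on all $N_S N_{\Delta_{rr'}}$ ball counts, then apply the conditional Hoeffding; the resulting failure probabilities combine by a further union bound and preserve the stated form up to absorbed constants.
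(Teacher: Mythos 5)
The paper never actually proves this lemma: it is stated in the appendix and then consumed directly in the main proof of \Cref{thm:valid_cpppi}, where the tail bound $P^{\pi_b}(|\hat{w}_\epsilon-w_\epsilon|\ge t)\le 2N_S(\epsilon^0_s)N_{\Delta_{rr'}}(\epsilon^0_r)e^{-(t/(C_{ips}^2-c_{ips}^2))^2N_{\min}}$ is obtained by inverting exactly the displayed inequality. Your Hoeffding-plus-union-bound argument over the covering centers is precisely the argument that inversion presupposes, so in spirit you have reconstructed the intended proof, and your version is in fact more careful than the paper on the two points where the stated constants do not literally come out. First, under \Cref{asm:bounded_ips_weights} the trajectory-pair ratio $g(\tau,\tau')=\prod_{t=1}^H f(s_t,a_t,s_t',a_t')$ lies in $[c_{ips}^{2H},C_{ips}^{2H}]$, so the Hoeffding range is $C_{ips}^{2H}-c_{ips}^{2H}$, not $C_{ips}^2-c_{ips}^2$; you flag this and "absorb" the $H$-dependence, which is a fair reading but means the lemma as written (and the second term of the theorem's bound) understates the constant unless $C_{ips},c_{ips}$ are reinterpreted as bounds on the full product. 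Second, you correctly identify that $N_{\min}=Np_{\min}\mathrm{Vol}(B(\epsilon_s,\epsilon_r))$ only lower-bounds the \emph{expected} number of calibration pairs in each ball; the realized count $N_{ij}$ can fall below it with constant probability, so the clean statement requires either conditioning on a Chernoff event $N_{ij}\ge N_{\min}/2$ for all balls (your proposed fix, which changes the constant and spends part of the $\delta$ budget) or restating $N_{\min}$ as a high-probability lower bound. With those two caveats made explicit, your argument is sound, and the only remaining bookkeeping is that the sharp Hoeffding exponent $2N_{\min}t^2/L^2$ is weakened to $N_{\min}t^2/L^2$, which only helps since the stated bound is the weaker one. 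In short: there is no paper proof to diverge from, your route is the natural one, and the gaps you name are gaps in the paper's stated constants rather than in your reasoning.
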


\begin{proof}
Fix a covering point $(s_i,\delta_{rr',j}) \in \mathcal{N}$. Recall that
\begin{align}
w_\epsilon(s_i,\delta_{rr',j})
=
\mathbb{E}\!\left[
f(\tau,\tilde{\tau})
\;\middle|\;
s_0 \in B(s_i,\epsilon_s),\;
\Delta_{rr'} \in B(\delta_{rr',j},\epsilon_r)
\right],
\end{align}
where
\begin{align}
f(\tau,\tilde{\tau})
:=
\frac{\prod_{t=1}^H \pi_e(a_t|s_t)\pi_e(\tilde a_t|\tilde s_t)}
{\prod_{t=1}^H \pi_b(a_t|s_t)\pi_b(\tilde a_t|\tilde s_t)}.
\end{align}
By the bounded IPS assumption, we have
\begin{align}
c_{ips}^2 \le f(\tau,\tilde{\tau}) \le C_{ips}^2
\end{align}
almost surely.

For each $(i,j)$, let
\begin{align}
\mathcal{I}_{i,j}
:=
\left\{
k \in [n]:
s_{0,k} \in B(s_i,\epsilon_s),\;
\Delta_{rr',k} \in B(\delta_{rr',j},\epsilon_r)
\right\}
\end{align}
denote the set of samples falling into the corresponding $(\epsilon_s,\epsilon_r)$-ball, and let
\begin{align}
N_{i,j} := |\mathcal{I}_{i,j}|.
\end{align}
Then the empirical estimator can be written as
\begin{align}
\hat w_\epsilon(s_i,\delta_{rr',j})
=
\frac{1}{N_{i,j}}
\sum_{k \in \mathcal{I}_{i,j}} f_k,
\end{align}
where $f_k := f(\tau_k,\tilde\tau_k)$.

Conditioned on the event $\mathcal{I}_{i,j}$, the random variables $\{f_k\}_{k \in \mathcal{I}_{i,j}}$ are i.i.d.\ and bounded in $[c_{ips}^2,C_{ips}^2]$, with mean
\begin{align}
\mathbb{E}[f_k \mid k \in \mathcal{I}_{i,j}]
=
w_\epsilon(s_i,\delta_{rr',j}).
\end{align}
Therefore, by Hoeffding's inequality, for any $t>0$,
\begin{align}
\mathbb{P}\!\left(
\left|
\hat w_\epsilon(s_i,\delta_{rr',j})
-
w_\epsilon(s_i,\delta_{rr',j})
\right|
> t
\;\middle|\;
N_{i,j}
\right)
\le
2\exp\!\left(
-\frac{2N_{i,j}t^2}{(C_{ips}^2-c_{ips}^2)^2}
\right).
\end{align}

Now suppose that the local sample size is uniformly lower bounded over the cover:
\begin{align}
N_{i,j} \ge N_{\min},
\qquad
\forall (i,j).
\end{align}
Then for every $(i,j)$,
\begin{align}
\mathbb{P}\!\left(
\left|
\hat w_\epsilon(s_i,\delta_{rr',j})
-
w_\epsilon(s_i,\delta_{rr',j})
\right|
> t
\right)
\le
2\exp\!\left(
-\frac{2N_{\min}t^2}{(C_{ips}^2-c_{ips}^2)^2}
\right).
\end{align}

Applying a union bound over all
$N_S(\epsilon_s^0)N_{\Delta_{rr'}}(\epsilon_r^0)$
points in the covering $\mathcal{N}$ yields
\begin{align}
&\mathbb{P}\!\left(
\exists (i,j):
\left|
\hat w_\epsilon(s_i,\delta_{rr',j})
-
w_\epsilon(s_i,\delta_{rr',j})
\right|
> t
\right)
\\
&\qquad\le
2N_S(\epsilon_s^0)N_{\Delta_{rr'}}(\epsilon_r^0)
\exp\!\left(
-\frac{2N_{\min}t^2}{(C_{ips}^2-c_{ips}^2)^2}
\right).
\end{align}
Setting the right-hand side equal to $\delta$ and solving for $t$ gives
\begin{align}
t
=
(C_{ips}^2-c_{ips}^2)
\sqrt{
\frac{\ln\!\left(2N_S(\epsilon_s^0)N_{\Delta_{rr'}}(\epsilon_r^0)/\delta\right)}
{2N_{\min}}
}.
\end{align}
Absorbing the factor of $1/\sqrt{2}$ into constants yields the stated bound:
\begin{align}
\left|
\hat{w}_\epsilon(s_i,\delta_{rr',j})
-
w_\epsilon(s_i,\delta_{rr',j})
\right|
\le
(C_{ips}^2-c_{ips}^2)
\sqrt{
\frac{\ln\!\left(2N_S(\epsilon_s^0)N_{\Delta_{rr'}}(\epsilon_r^0)/\delta\right)}
{N_{\min}}
},
\qquad
\forall i,j,
\end{align}
with probability at least $1-\delta$.

Finally, by definition of the minimum local mass $p_{\min}$ over the cover, each ball has probability at least
\begin{align}
p_{\min}\,\mathrm{Vol}(B(\epsilon_s,\epsilon_r)),
\end{align}
so the expected number of samples in each ball is at least
\begin{align}
N_{\min}
=
n\,p_{\min}\,\mathrm{Vol}(B(\epsilon_s,\epsilon_r)).
\end{align}
This proves the lemma.
\end{proof}

We now start the main proof of \Cref{thm:valid_cpppi}.
\begin{proof}
\begin{align}
    \mathbb{E}^{\pi_b}|\hat{w}_\epsilon(S,\Delta_{rr'})-w(S,\Delta_{rr'})| 
    & \le \mathbb{E}^{\pi_b}|\hat{w}_\epsilon(S,\Delta_{rr'})-w_\epsilon(S,\Delta_{rr'})| + \mathbb{E}^{\pi_b}|w_\epsilon(S,\Delta_{rr'}) - w(S,\Delta_{rr'})| \\
    & \le \mathbb{E}^{\pi_b}|\hat{w}_\epsilon(S,\Delta_{rr'})-w_\epsilon(S,\Delta_{rr'})| + L_s\epsilon_s + L_r\epsilon_r
\end{align}

For each $(s,\delta_{rr'})$, $\exists (s_i,\delta_{rr',j}) \in \mathcal{N}$, such that $\|s-s_i\| \le \epsilon^0_s, \|\delta_{rr'}-\delta_{rr',j}\| \le \epsilon^0_r$, so
\begin{align}
    \mathbb{E}^{\pi_b}|\hat{w}_\epsilon(S,\Delta_{rr'})-w_\epsilon(S,\Delta_{rr'})| 
    & \le \mathbb{E}^{\pi_b}[\underbrace{|\hat{w}_\epsilon(S,\Delta_{rr'})-\hat{w}_\epsilon(s_i, \delta_{rr',j})|}_{1} + \underbrace{|\hat{w}_\epsilon(s_i, \delta_{rr',j})-w_\epsilon(s_i, \delta_{rr',j})|}_{2} \\
    & ~~~~+ \underbrace{|w_\epsilon(s_i, \delta_{rr',j})-w_\epsilon(S,\Delta_{rr'})|}_{3}]
\end{align}

Bounding (1) by \Cref{asp:bdd_density}:
\begin{align}
    & |\hat{w}_\epsilon(s,\delta_{rr'})-\hat{w}_\epsilon(s_i, \delta_{rr',j})| \\
    & =  |\frac{1}{N(s,\delta_{rr'},\epsilon_s,\epsilon_r)}\sum_{(k,k')\in N(s,\delta_{rr'},\epsilon_s,\epsilon_r)}\frac{\prod_{t=1}^H\pi_e(a_t^k|s_t^k)\pi_e(a_t^{k'}|s_t^{k'})}{\prod_{t=1}^H\pi_b(a_t^k|s_t^k)\pi_b(a_t^{k'}|s_t^{k'})} \\
    &~~~~- \frac{1}{N(s_i,\delta_{rr',j},\epsilon_s,\epsilon_r)}\sum_{(k,k')\in N(s_i,\delta_{rr',j},\epsilon_s,\epsilon_r)}\frac{\prod_{t=1}^H\pi_e(a_t^k|s_t^k)\pi_e(a_t^{k'}|s_t^{k'})}{\prod_{t=1}^H\pi_b(a_t^k|s_t^k)\pi_b(a_t^{k'}|s_t^{k'})}| \\
    & \le |\frac{1}{N(s,\delta_{rr'},\epsilon_s,\epsilon_r)}(\sum_{(k,k')\in N(s,\delta_{rr'},\epsilon_s,\epsilon_r)}\frac{\prod_{t=1}^H\pi_e(a_t^k|s_t^k)\pi_e(a_t^{k'}|s_t^{k'})}{\prod_{t=1}^H\pi_b(a_t^k|s_t^k)\pi_b(a_t^{k'}|s_t^{k'})} - \sum_{(k,k')\in N(s_i,\delta_{rr',j},\epsilon_s,\epsilon_r)}\frac{\prod_{t=1}^H\pi_e(a_t^k|s_t^k)\pi_e(a_t^{k'}|s_t^{k'})}{\prod_{t=1}^H\pi_b(a_t^k|s_t^k)\pi_b(a_t^{k'}|s_t^{k'})})| \\
    &~~~~~~~~ + |(\frac{1}{N(s,\delta_{rr'},\epsilon_s,\epsilon_r)} - \frac{1}{N(s_i,\delta_{rr',j},\epsilon_s,\epsilon_r)})\sum_{(k,k')\in N(s_i,\delta_{rr',j},\epsilon_s,\epsilon_r)}\frac{\prod_{t=1}^H\pi_e(a_t^k|s_t^k)\pi_e(a_t^{k'}|s_t^{k'})}{\prod_{t=1}^H\pi_b(a_t^k|s_t^k)\pi_b(a_t^{k'}|s_t^{k'})}|] \\
    % & \le 2\frac{p_{\max}}{p_{\min}}M^H \Pr(\tau \in \text{Diff}(B(S,R,\eps),B(s_i,r_j,\eps))\\
    % &~~~~~~~~~~~~~~~~\mid \tau \in B(S,R,\eps) ,S \in B(s_i, \eps^0_s), R \in B(r_j, \eps^0_r)) \\
    & \le 2d^{2H}\frac{p_{\max}}{p_{\min}} \frac{\text{Vol}(\text{Diff}(B(s,\delta_{rr'},\epsilon_s,\epsilon_r),B(s_i,\delta_{rr',j},\epsilon_s,\epsilon_r)))}{Vol(B(\epsilon_s, \epsilon_r))} \\
    & = \tilde{\mathcal{O}}(\frac{2d^{2H}p_{\max}\epsilon^0_s\epsilon_s^{d_s-1}\epsilon^0_r}{p_{\min}\epsilon_s^{d_s}\epsilon_r}),
\end{align}
where the last equation is followed by \citet{li2011concise}.

Bounding (2) by \Cref{lemma:eps_covering_finite_error}:

Because with probability $\ge 1-\delta$,
\begin{align}
    |\hat{w}_\epsilon(s_i,\delta_{rr',j})-w_\epsilon(s_i,\delta_{rr',j})| \le (C_{ips}^2-c_{ips}^2)\sqrt{\frac{\ln(2N_S(\epsilon^0_s)N_{\Delta_{rr'}}(\epsilon^0_r)/\delta)}{N_{\min}}}, \forall i, j,
\end{align}
let $t = (C_{ips}^2-c_{ips}^2)\sqrt{\frac{\ln(2N_S(\epsilon^0_s)N_{\Delta_{rr'}}(\epsilon^0_r)/\delta)}{N_{\min}}}$, we have $\delta = 2N_S(\epsilon^0_s)N_{\Delta_{rr'}}(\epsilon^0_r)e^{-(\frac{t}{C_{ips}^2-c_{ips}^2})^2N_{\min}}$, so
$$ P^{\pi_b}(|\hat{w}_\epsilon(s_i, \delta_{rr',j})-w_\epsilon(s_i, \delta_{rr',j})| \ge t) \le 2N_S(\epsilon^0_s)N_{\Delta_{rr'}}(\epsilon^0_r)e^{-(\frac{t}{C_{ips}^2-c_{ips}^2})^2N_{\min}}.$$
\begin{align}
    \mathbb{E}^{\pi_b}[|\hat{w}_\epsilon(s_i, \delta_{rr',j})-w_\epsilon(s_i, \delta_{rr',j})|] 
    & = \int_0^{\infty} P^{\pi_b}(|\hat{w}_\epsilon(s_i, \delta_{rr',j})-w_\epsilon(s_i, \delta_{rr',j})| \ge t)dt \\
    & \le \int_0^{\infty} 2N_S(\epsilon^0_s)N_{\Delta_{rr'}}(\epsilon^0_r)e^{-(\frac{t}{C_{ips}^2-c_{ips}^2})^2N_{\min}}dt \\
    & = \frac{(C_{ips}^2-c_{ips}^2)N_S(\epsilon^0_s)N_{\Delta_{rr'}}(\epsilon^0_r)\sqrt{\pi}}{\sqrt{N_{\min}}} \\
    & = \tilde{\mathcal{O}}\big(\frac{(1+\frac{1}{\epsilon^0_s})^{d_s}(1+\frac{1}{\epsilon^0_r})}{\sqrt{np_{\min}\epsilon_s^{d_s}\epsilon_r}}\big)
\end{align}
Bounding (3) by Lipschitz property:
\begin{align}
    \mathbb{E}^{\pi_b}[|w_\epsilon(s_i, \delta_{rr',j})-w_\epsilon(S,\Delta_{rr'})|]
    \le L_s\epsilon_s + L_r \epsilon_r.
\end{align}
Putting it all together,
\begin{align}
\mathbb{E}^{\pi_b}|\hat{w}_\epsilon(S,\Delta_{rr'})-w(S,\Delta_{rr'})|
    & = \tilde{\mathcal{O}}(\frac{2d^{2H}p_{\max}\epsilon^0_s\epsilon_s^{d_s-1}\epsilon^0_r}{p_{\min}\epsilon_s^{d_s}\epsilon_r} + \frac{(1+\frac{1}{\epsilon^0_s})^{d_s}(1+\frac{1}{\epsilon^0_r})}{\sqrt{np_{\min}\epsilon_s^{d_s}\epsilon_r}} + \epsilon_s + \epsilon_r) \\
    & = \tilde{\mathcal{O}}(n^{-1/2}\epsilon_s^{-3d_s/2}\epsilon_r^{-3/2} + \epsilon_s + \epsilon_r),
\end{align}
where the last step follows by setting $\epsilon^0_s=\epsilon_s, \epsilon^0_r=\epsilon_r$.

The rest of the proof follows directly from Proposition 2 in \citep{foffano2023conformaloffpolicyevaluationmarkov}.
\end{proof}

\subsection{Proof of \Cref{thm:valid_drppi}}
We use \Cref{asm:bounded_ips_weights,asm:bounded_reward,asm:common-support}, which are standard in prior OPE literature.
\begin{proof}
Because
\begin{equation}
    \mathbb{E}_{\pi_b}[\tilde{J}_{\text{IS}}(\tau_i)] = \mathbb{E}_{\pi_b}[\frac{\prod_{t=1}^H\pi_e(a^i_t|s^i_t)}{\prod_{t=1}^H\pi_b(a^i_t|s^i_t)}J(\tau_i)] = V^{\pi_e},
\end{equation}
and
\begin{equation}
    \mathbb{E}_{\pi_b}[\tilde{J}_{\text{PDIS}}(\tau_i)] =  \mathbb{E}^{\pi_b}[\sum_{t=1}^H \gamma^{t-1}\prod_{k=1}^t\frac{\pi_e(a_k^i\mid s_k^i)}{\pi_b(a_k^i\mid s_k^i)}r_t] = V^{\pi_e},
\end{equation}
the theorem with IS and PDIS is thus a direct consequence of Proposition 1 in ~\citep{angelopoulos2023predictionpoweredinference}.

For WIS, by \citep{powell1966weighted},
\begin{equation}
\mathbb{E}_{\pi_b}[\tilde{J}_{\text{WIS}}(\tau_i)] = \mathbb{E}^{\pi_b}[n\frac{\prod_{t=1}^H\frac{\pi_e(a_t^i\mid s_t^i)}{\pi_b(a_t^i\mid s_t^i)}}{\sum_{i=1}^n \prod_{t=1}^H\frac{\pi_e(a_t^i\mid s_t^i)}{\pi_b(a_t^i\mid s_t^i)}} J(\tau_i)] = V^{\pi_e}+O(\frac{1}{n}).
\end{equation}
We still have
\begin{equation}
    \tilde{J}_{\text{WIS}}(\tau_i) - V^{\pi_e} = \tilde{J}_{\text{WIS}}(\tau_i) - \mathbb{E}_{\pi_b}[\tilde{J}_{\text{WIS}}(\tau_i)] + \mathbb{E}_{\pi_b}[\tilde{J}_{\text{WIS}}(\tau_i)] - V^{\pi_e} = O_p(\frac{1}{\sqrt{n}}) + O(\frac{1}{n}) = O_p(\frac{1}{\sqrt{n}}),
\end{equation}
which indicates the desired result following the standard proof of Proposition 1 in ~\citep{angelopoulos2023predictionpoweredinference}.

\end{proof}

\subsection{Variance of \methodd{DR-PPI}}
The estimator for the first fold is
\begin{equation}\label{eq:estimator}
  \VDR
  = \underbrace{\frac{1}{N_f}\sum_{i=1}^{N_f} J(\tilde\tau_i)}_{A}
  + \underbrace{\frac{1}{n/2}\sum_{j \in D_2} \tilde{J}(\tau_j)}_{B}
  - \underbrace{\frac{1}{n/2}\sum_{j \in D_2}
      \frac{1}{M}\sum_{m=1}^{M} J(\tilde\tau_{m,j} \mid s_{0,j})}_{C}.
\end{equation}

Throughout, we condition on the generative model $f_1$ learned from
$D_1$, treating it as fixed. We use the following notation:
\begin{align*}
  \sigma_{f_1}^2        &:= \Var\!\bigl(J(\tilde\tau)\bigr),
    &&\text{marginal variance of synthetic returns (e.g., averaged across all initial states),}\\
  \sigma_{f_1}^2(s_0)   &:= \Var\!\bigl(J(\tilde\tau)\mid s_0\bigr),
    &&\text{conditional variance of synthetic returns,}\\
  \sigma_r^2(s_0)   &:= \Var\!\bigl(\tilde{J}(\tau)\mid s_0\bigr),
    &&\text{conditional variance of IS-weighted real returns,}\\
  \mu_{f_1}(s_0)        &:= \E\!\bigl[J(\tilde\tau)\mid s_0\bigr],
    &&\text{conditional mean synthetic return (expectation taken over trajectories from $f_1$),}\\
  \mu_r(s_0)        &:= \E\!\bigl[\tilde{J}(\tau)\mid s_0\bigr],
    &&\text{conditional mean IS-weighted return.}
\end{align*}

\begin{proposition}[Variance of the single-fold DR-PPI estimator]
  \label{prop:var-single}
  Under the independence structure described below,
  \begin{equation}\label{eq:var-single}
    \Var\!\bigl(\VDR\bigr)
    = \frac{\sigma_{f_1}^2}{N_f}
    + \frac{1}{n/2}\Bigl[
        \E_{s_0}\!\bigl[\sigma_r^2(s_0)\bigr]
        + \frac{\E_{s_0}\!\bigl[\sigma_{f_1}^2(s_0)\bigr]}{M}
        + \Var_{s_0}\!\bigl(\mu_r(s_0) - \mu_{f_1}(s_0)\bigr)
      \Bigr].
  \end{equation}
\end{proposition}
\begin{proof}
\enspace
Term $A$ consists of synthetic trajectories $\tilde\tau_i$ drawn i.i.d.\
from $f_1$, independently of the real dataset $D_2$. Terms $B$ and $C$
both depend on $D_2$: $B$ through the IS-weighted returns $\tilde{J}(\tau_j)$,
and $C$ through the initial states $s_{0,j} \subset \tau_j$. Since $A$ is
constructed entirely from synthetic trajectories drawn from $f_1$ that do not depend on $D_2$,
we have
\[
  A \perp (B, C).
\]
Therefore $\Cov(A,\, B - C) = 0$, and
\begin{equation}\label{eq:split}
  \Var\!\bigl(\VDR\bigr) = \Var(A) + \Var(B - C).
\end{equation}

% -----------------------------------------------------------------\enspace
The trajectories $\tilde\tau_1, \ldots, \tilde\tau_{N_f}$ are i.i.d.\ draws
from $f_1$, so by standard variance of an empirical mean,
\begin{equation}\label{eq:varA}
  \Var(A) = \frac{\sigma_{f_1}^2}{N_f}.
\end{equation}
Note that this term can approach 0 if $N_f$ is extremely large. 

% -----------------------------------------------------------------
Define the per-trajectory residual
\[
  Z_j := \tilde{J}(\tau_j)
         - \frac{1}{M}\sum_{m=1}^{M} J(\tilde\tau_{m,j} \mid s_{0,j}),
  \qquad j \in D_2,
\]
so that $B - C = \frac{1}{n/2}\sum_{j \in D_2} Z_j$.
Because the real trajectories $\tau_j$ are i.i.d.\ and the synthetic
rollouts $\tilde\tau_{m,j}$ are drawn independently for each $j$, the
$Z_j$ are mutually independent and identically distributed. Hence
\begin{equation}\label{eq:varBC}
  \Var(B - C) = \frac{\Var(Z_j)}{n/2}.
\end{equation}

% -----------------------------------------------------------------

Condition on the initial state $s_{0,j}$:
\begin{equation}\label{eq:ltv}
  \Var(Z_j)
  = \underbrace{\E_{s_0}\!\bigl[\Var(Z_j \mid s_0)\bigr]}_{\text{within-state noise}}
  + \underbrace{\Var_{s_0}\!\bigl(\E[Z_j \mid s_0]\bigr)}_{\text{across-state noise}}.
\end{equation}

% -----------------------------------------------------------------
Conditional on $s_0$, the real trajectory $\tau_j$ and the synthetic
trajectory $\{\tilde\tau_{m,j}\}$ are drawn from independent distributions,
so $\tilde{J}(\tau_j)$ and $\frac{1}{M}\sum_m J(\tilde\tau_{m,j}\mid s_0)$
are conditionally independent. Thus
\[
  \Var(Z_j \mid s_0)
  = \Var\!\bigl(\tilde{J}(\tau_j) \mid s_0\bigr)
  + \Var\!\!\left(\frac{1}{M}\sum_{m=1}^{M} J(\tilde\tau_{m,j}\mid s_0)
      \,\bigg|\, s_0\right).
\]
The first term is $\sigma_r^2(s_0)$ by definition. For the second, since
the $M$ synthetic rollouts are i.i.d.\ given $s_0$,
\[
  \Var\!\!\left(\frac{1}{M}\sum_{m=1}^{M} J(\tilde\tau_{m,j}\mid s_0)
      \,\bigg|\, s_0\right)
  = \frac{\sigma_{f_1}^2(s_0)}{M}.
\]
Taking the expectation over $s_0$:
\begin{equation}\label{eq:within}
  \E_{s_0}\!\bigl[\Var(Z_j \mid s_0)\bigr]
  = \E_{s_0}\!\bigl[\sigma_r^2(s_0)\bigr]
  + \frac{\E_{s_0}\!\bigl[\sigma_{f_1}^2(s_0)\bigr]}{M}.
\end{equation}

% -----------------------------------------------------------------
The conditional mean of $Z_j$ given $s_0$ is
\[
  \E[Z_j \mid s_0]
  = \E\!\bigl[\tilde{J}(\tau_j) \mid s_0\bigr]
  - \E\!\!\left[\frac{1}{M}\sum_{m=1}^{M} J(\tilde\tau_{m,j}\mid s_0)
      \,\bigg|\, s_0\right]
  = \mu_r(s_0) - \mu_{f_1}(s_0).
\]
Therefore
\begin{equation}\label{eq:across}
  \Var_{s_0}\!\bigl(\E[Z_j \mid s_0]\bigr)
  = \Var_{s_0}\!\bigl(\mu_r(s_0) - \mu_{f_1}(s_0)\bigr).
\end{equation}
% -----------------------------------------------------------------
Substituting \Cref{eq:within} and \Cref{eq:across} into \Cref{eq:ltv}:
\begin{equation}\label{eq:varZj}
  \Var(Z_j)
  = \E_{s_0}\!\bigl[\sigma_r^2(s_0)\bigr]
  + \frac{\E_{s_0}\!\bigl[\sigma_{f_1}^2(s_0)\bigr]}{M}
  + \Var_{s_0}\!\bigl(\mu_r(s_0) - \mu_{f_1}(s_0)\bigr).
\end{equation}

% -----------------------------------------------------------------
Substituting \Cref{eq:varA}, \Cref{eq:varBC}, and \Cref{eq:varZj} into \Cref{eq:split} yields \Cref{eq:var-single}. \end{proof}

% ==================================================================
\begin{corollary}[Variance of the cross-fitted DR-PPI estimator]
\label{cor:var-full}
Let $\VDRfull = \frac{1}{2}(\widehat{V}_{\mathrm{DR\text{-}PPI}:1}^{\pi_e}
+ \widehat{V}_{\mathrm{DR\text{-}PPI}:2}^{\pi_e})$.
Since the two folds use disjoint real data $D_1, D_2$ and independently
trained generative models, the two fold estimators are approximately
independent. By symmetry of the split, both folds have equal variance,
so
\begin{equation}\label{eq:var-full}
  \Var\!\bigl(\VDRfull\bigr)
  \approx \frac{1}{2}\,\Var\!\bigl(\VDR\bigr)
  = \frac{\sigma_{f_1}^2}{2N_f}
  + \frac{1}{n}\Bigl[
      \E_{s_0}\!\bigl[\sigma_r^2(s_0)\bigr]
      + \frac{\E_{s_0}\!\bigl[\sigma_{f_1}^2(s_0)\bigr]}{M}
      + \Var_{s_0}\!\bigl(\mu_r(s_0) - \mu_{f_1}(s_0)\bigr)
    \Bigr].
\end{equation}
\end{corollary}

\end{document}